\documentclass[ba]{imsart}
%
% \pubyear{2021}
% \volume{TBA}
% \issue{TBA}
% \doi{0000}
% \arxiv{}
\pubyear{}
% \volume{}
% \issue{}
\doi{}
\arxiv{}
% \firstpage{1}
% \lastpage{1}

\usepackage{amsthm}
\usepackage{amsmath}
\usepackage{natbib}
\usepackage[colorlinks,citecolor=blue,urlcolor=blue,filecolor=blue,backref=page]{hyperref}
\usepackage{graphicx}
\usepackage{graphicx,bm,hyperref,amssymb,amsmath,amsthm}
\usepackage{algorithmic,xcolor, float, url, graphicx}
\usepackage{graphicx}
\usepackage[caption=false]{subfig}

\usepackage{tikz}

%% Useful packages

% -------------------------------------- macros --------------------------
% general ...
\newcommand{\bi}{\begin{itemize}}
\newcommand{\ei}{\end{itemize}}
\newcommand{\ben}{\begin{enumerate}}
\newcommand{\een}{\end{enumerate}}
\newcommand{\be}{\begin{equation}}
\newcommand{\ee}{\end{equation}}
\newcommand{\bea}{\begin{eqnarray}} 
\newcommand{\eea}{\end{eqnarray}}
\newcommand{\ba}{\begin{align}} 
\newcommand{\ea}{\end{align}}
\newcommand{\bse}{\begin{subequations}} 
\newcommand{\ese}{\end{subequations}}
\newcommand{\bc}{\begin{center}}
\newcommand{\ec}{\end{center}}
\newcommand{\bfi}{\begin{figure}}
\newcommand{\efi}{\end{figure}}

\newcommand{\bmp}[1]{\begin{minipage}{#1}}
\newcommand{\emp}{\end{minipage}}
 % mp-fig, nogap
\newcommand{\bp}{\begin{proof}}
\newcommand{\ep}{\end{proof}}

      % infinite integral

\newcommand{\half}{\mbox{\small $\frac{1}{2}$}}

\newcommand{\R}{\mathbb{R}}

  % 4-col-vec
 % 2-col-vec
\newcommand{\bigO}{{\mathcal O}}

\newtheorem{thm}{Theorem}

\newtheorem{lem}[thm]{Lemma}

\newtheorem{pro}[thm]{Proposition}
\newtheorem{rmk}[thm]{Remark}
\newtheorem{dfn}[thm]{Definition}
\newcommand{\pushright}[1]{\ifmeasuring@#1\else\omit\hfill$\displaystyle#1$\fi\ignorespaces}

% this work...
\newcommand{\al}{\alpha}
\newcommand{\eps}{\varepsilon}

\newcommand{\tal}{\widetilde\al}

\startlocaldefs
% ** Local definitions **
\endlocaldefs

\begin{document}

\begin{frontmatter}
\title{Delayed rejection Hamiltonian Monte Carlo for sampling multiscale distributions}
\runtitle{DRHMC for sampling multiscale distributions}

\begin{aug}
\author{\fnms{} \snm{}}
\author{\fnms{Chirag} \snm{Modi}\thanksref{addr1,addr2}\ead[label=e1]{Flatiron Institute}},
\author{\fnms{Alex} \snm{Barnett}\thanksref{addr1}},
\and
\author{\fnms{Bob} \snm{Carpenter}\thanksref{addr1}}%
% \ead[label=e3]{third@somewhere.com}%
% \ead[label=u1,url]{http://www.foo.com}}

\address[addr1]{Center for Computational Mathematics, Flatiron Institute, New York
    % \printead{e1} % print email address of "e1"
    % \printead*{e2}
}
\address[addr2]{Center for Computational Astrophysics, Flatiron Institute, New York
    % \printead{e1} % print email address of "e1"
    % \printead*{e2}
}

\runauthor{Modi et al.}

%\thankstext{<id>}{<text>}

\end{aug}

\begin{abstract}
The efficiency of Hamiltonian Monte Carlo (HMC) can suffer when sampling a distribution with a wide range of length scales, because the small step sizes needed for stability in high-curvature regions are inefficient elsewhere.
To address this we present a {\em delayed rejection} variant: if an initial HMC trajectory is rejected, we make one or more subsequent proposals each using a step size geometrically smaller than the last.  We extend the standard delayed rejection framework by allowing the probability of a retry to depend on the probability of accepting the previous proposal.
We test the scheme in several sampling tasks, including multiscale model distributions such as Neal's funnel, and statistical applications.
  Delayed rejection enables up to five-fold performance gains over optimally-tuned HMC, as measured by effective sample size per gradient evaluation.  Even for simpler distributions, delayed rejection provides increased robustness to step size misspecification.
  Along the way, we provide an accessible but rigorous review of detailed balance for HMC.
  \end{abstract}

%% ** Keywords **
\begin{keyword}%[class=MSC]
\kwd{delayed rejection, Hamiltonian Monte Carlo, detailed balance, multiscale}
\end{keyword}

\end{frontmatter}

%% ** Mainmatter **

%\section{}\label{}

% \begin{figure} 
% \includegraphics{<eps-file>}% place <eps-file> in ./img  subfolder
% \caption{}
% \label{}
% \end{figure}

% \begin{table} 
% *****************
% \begin{tabular}{lll}
% \end{tabular}
% *****************
% \caption{}
% \label{}
% \end{figure}

%%%%%%%%%%%%%%%%%%%%%%%%%%%%%%%%%%%%%%%%%%%%%%
%% Supplementary Material, if any, should   %%
%% be provided in {supplement} environment  %%
%% with title and short description.        %%
%%%%%%%%%%%%%%%%%%%%%%%%%%%%%%%%%%%%%%%%%%%%%%
%\begin{supplement}
%\stitle{???}
%\sdescription{???.}
%\end{supplement}

%% ** The bibliograhy **
%\bibliographystyle{ba}
%\bibliography{<bib-data-file>}% place <bib-data-file> in ./bib folder 

% ** Acknowledgements **
%\begin{acks}[Acknowledgments]
%\end{acks}

\section{Introduction}

Hamiltonian Monte Carlo (HMC), including auto-tuned extensions like the no U-turn sampler (NUTS), have become the de facto standard for high performance sampling of high-dimensional, differentiable distributions \citep{Duane1987, neal11, NUTS}.  
One reason for this is that HMC scales much better with dimension than
other Markov chain Monte Carlo (MCMC) methods such as
random-walk Metropolis or Gibbs sampling.
HMC's scalability derives from its ability to move large distances by approximating the Hamiltonian flow defined by the gradient of a distribution's log density function \citep{betancourt2017conceptual}.
As a result, HMC is believed to require $\mathcal{O}(d^{5/4})$ iterations to generate an independent sample in $d$ dimensions as compared to the $\mathcal{O}(d^2)$ samples required with random-walk Metropolis or Gibbs sampling \citep{neal11}.  The actual efficiency also depends strongly on geometric features of the density being sampled, particularly issues of high correlation between coordinates (leading to {\em stiffness}, i.e., ill-conditioning of the local curvature Hessian), and of spatially varying curvature (which defeats the use of global preconditioning to counteract stiffness).

One of the most common pathologies plaguing these algorithms
is the multiscale geometry of the posterior distributions \citep{Betancourt13, Pourzanjani19}: when the curvature of the log density varies spatially over a large dynamic range, small HMC time steps are needed for numerical stability in the high-curvature regions, preventing the use of the larger time steps needed for efficient sampling in smoother regions.
This geometry arises naturally in hierarchical models
that provide a population model for a group of effects in order to support regularization and partial pooling.  
However since all the contributions at the bottom of the hierarchy depend on the common global parameter, 
a small change in these high level parameters can induces large changes in the conditional density of the effects.
Consequently, when the data are sparse and inference is sensitive to the priors on these parameters,
the posterior density of these models looks like a ``funnel''
with a region of high density but low volume (``neck'') widening to a region of low density and high volume (``mouth'').
We show a two-dimensional example of this distribution in Figure \ref{fig:funnel}.
In the right panel of the same figure, we show the dramatic variations in condition number as the log scale parameter moves along the funnel.
Sampling this distribution is challenging because the mouth and the neck of the funnel contain equal probability mass
and so any sampling algorithm needs to handle these variations in curvature.

\begin{figure}[t!]
\centering
\includegraphics[width = 0.8\textwidth]{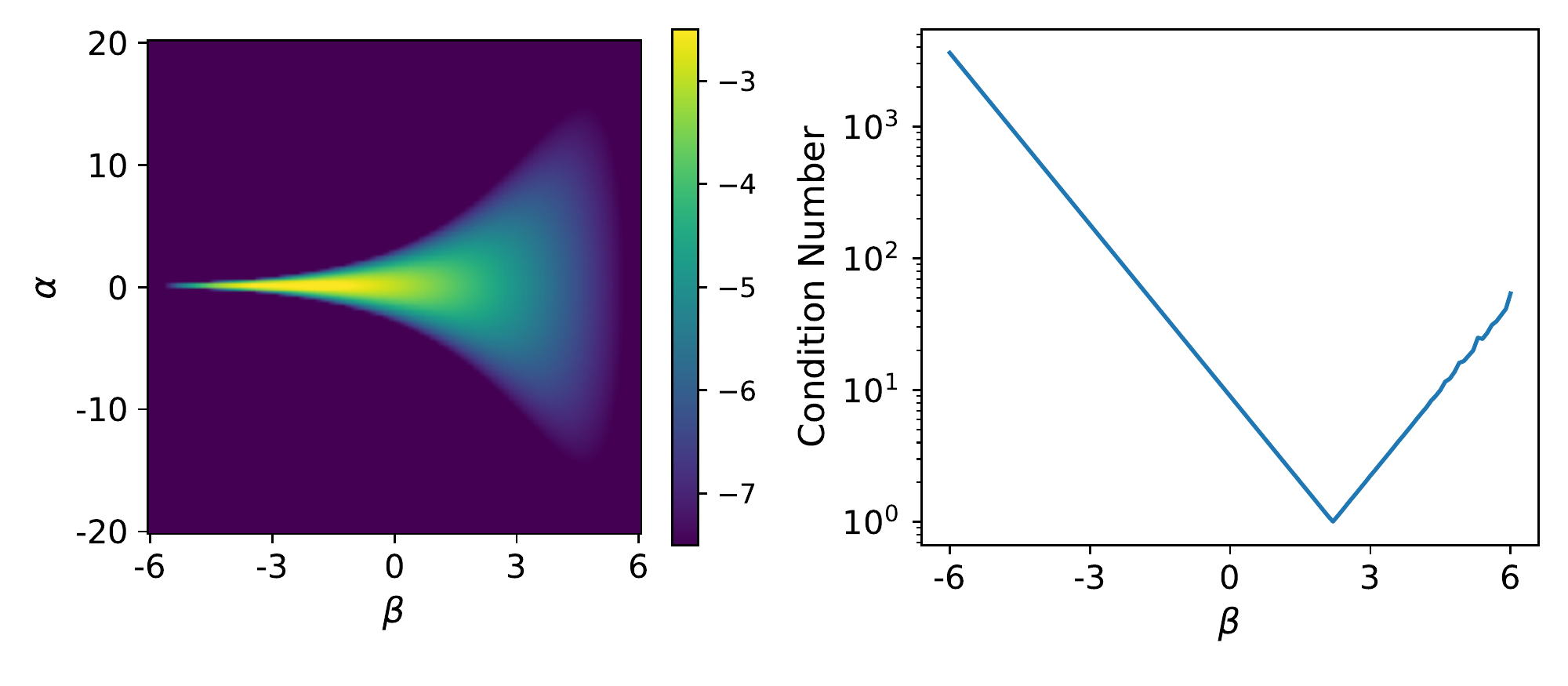}
\vspace{-20pt}
\caption{Neal's funnel. (Left) Natural log density of the two-dimensional funnel (Equation~\ref{eq:funnel})
showing a region of high density but low volume (``neck'', $\beta<0$) to the left of a region of low density and high volume (``mouth'', $\beta>0$).
(Right) Condition number of the inverse Hessian as a function of the log scale parameter $\beta$, along the slice $\alpha=0$.
}
\label{fig:funnel}
\end{figure}

A standard option for managing varying curvature is to use Hessian information.  This has led to the development of 
Riemannian HMC \citep{girolami2011riemann}, which follows a Riemannian metric based on the posterior curvature.  However, this is prohibitively expensive in high dimensions because it requires a positive-definite matrix at each point and many posteriors do not have positive-define Hessians.  One way to do this is to have an explicit form of the Fisher information matrix \citep{girolami2011riemann} or to use a conditioning operator like SoftAbs \citep{betancourt2013general}.

An alternative way to deal with high curvature is to approximate the Hamiltonian flow with an implicit symplectic integrator, which is able to naturally adjust stepping in different regions of phase space \citep{Pourzanjani19, brofos2021}.  Even simple implicit integration schemes like implicit midpoint are costly and present an algorithmic challenge for efficient and stable line search.  Ultimately, we believe it will be necessary to combine implicit integration and delayed rejection to achieve greater robustness in the face of even more challenging posterior sampling problems.

In this work, we develop an alternate approach inspired by the use of \textit{delayed rejection} (DR) methods to sample multiscale posterior distributions.
Recall that a high rejection rate increases autocorrelation of the Markov chain, reducing sampling efficiency.
Whenever a rejection would occur in the Metropolis algorithm,
DR methods do additional work, which can even exploit knowledge of the first rejection, to make a new proposal with a higher chance of acceptance \citep{haario2006dram}.
Since such new (possibly expensive) proposals are mostly made only when the standard proposal is poor, efficiency can be increased.
Although well studied in the context of random walk Metropolis-Hastings sampling
\citep{Mirathesis, Tierney99, Green01, haario2006dram}, there has been relatively little work done with these approaches for Hamiltonian Monte Carlo samplers \citep{Sohl-Dickstein14, Campos15}.

Previous approaches employing DR with HMC extend the same trajectory upon rejection so as to balance the additional cost by making larger jumps in the state space \citep{Sohl-Dickstein14,Campos15}.
However, this approach is not helpful if the chains are stuck in a region of high curvature where instability causes a high rejection rate.
In such cases,
as with delayed rejection in random-walk Metropolis \citep{Green01,haario2006dram},
it is more productive instead to change the proposal parameters with the goal of increasing the chance of acceptance.
In this work, we use this idea, building upon the original idea of delayed rejection \citep{Tierney99, Green01} to develop delayed rejection HMC (DRHMC).
Upon a rejection, DRHMC makes one or more subsequent proposals with smaller step sizes, with the aim that these are more likely to give stable leapfrog integration than their rejected predecessors. The result is a form of adaptivity with respect to step size,
a very successful idea in numerical integration more generally.\footnote{In the general delayed rejection method, the second and subsequent proposals may depend on the earlier proposals \citep{Green01}.}

In the rest of this paper, we begin by reviewing, in a mathematically rigorous yet accessible fashion, Metropolis Hastings, HMC, and delayed rejection methods in Section \ref{sec:background}.
With these tools, we then derive DRHMC in section \ref{sec:DRHMC} for one or more proposals. We show that DRHMC obeys detailed balance, discuss the cost of delayed proposals and outline probabilistic alternatives to reduce the cost of delayed rejection approaches.
Then in section \ref{sec:experiments}, we consider some toy models as well as actual data models,
and show that DRHMC can provide significant speed-ups as compared to traditional HMC in sampling tough multiscale distributions.
We also show that in cases with no such pathologies, probabilistic DRHMC is no more expensive than  HMC, 
thus suggesting its use as a robust alternative.
We conclude with discussion in section \ref{sec:discussion}.
Two short appendices contain proofs needed
in the main text.

\section{Metropolis-Hastings, delayed rejection, deterministic maps, and HMC}
\label{sec:background}

In this section we recap background material that is not easy to find gathered in an accessible format.
While being somewhat tutorial in nature, this
also sets up essential notation for the coming presentation of DRHMC.
We include a proof, avoiding technical measure theory notation, that HMC samples the correct target distribution, since in the literature this is usually presented either heuristically \citep{mackayerice,neal11,Sohl-Dickstein14,betancourt2017conceptual}, or rigorously but in highly abstract terms \citep{Andrieu20}.
%We leave some proof steps to the appendices.

In general we use $x\in S$ to denote the state in a continuous state space $S$, which can be taken as $\R^n$. (When we specialize later to HMC for sampling a target density over $\R^d$, we will set $S=\R^{2d}$.)
The goal of random-walk Metropolis, as with any MCMC method \citep{mackayerice,geyer11},
is to sample from a target probability density function (pdf) $\pi$ over $S$.
We assume that $\pi$ is absolutely continuous (AC), meaning that it can be represented
by a nonnegative function.
We assume the usual normalization $\int \pi(x) \, \textrm{d}x = 1$
(although all MCMC methods discussed can handle unnormalized $\pi$).
Unless indicated, all integrals are over $S$.

A Markov chain is defined by its {\em transition kernel} $k(x,y)$,
which gives the probability density function of transitioning
to the next state $y$, conditioned on the current state $x$.
The normalization is thus
\be
\int k(x,y) \, \textrm{d}y = 1, \qquad \forall x\in S~.
\label{knrm}
\ee
More formally, the transition kernel is a {\em measure} that depends on the parameter $x$,
and only when this measure is AC can it be written as a kernel function $k(x,y)$.
We refer the reader to \citep{steinanal,hunteranal,Billingsley,Andrieu20} for background on measure theory.
We will need to handle non-AC cases, but only for measures that can be described
using Dirac delta distributions, so will avoid technical language.
A necessary condition for MCMC to sample the correct pdf $\pi$ is its
invariance under the transition operator\footnote{Note that the operator acts from the right, the opposite convention from integral equations.},
\be
\int \pi(x) \, k(x,y) \, \textrm{d}x = \pi(y)~, \qquad \forall y\in S~.
\qquad \mbox{(Invariance)}
\label{eq:inv}
\ee
One way to ensure invariance is to construct kernels which maintain {\em detailed balance} (DB, also called ``reversibility''), meaning
\be
\pi(x) k(x,y) = \pi(y) k(y,x)~. %\qquad \forall x,y\in S~.
\qquad \mbox{(DB)}
\label{eq:db}
\ee
For AC kernels this simply means that the two sides are equal
for almost all $x,y\in S$.
For non-AC kernels the two sides may not be defined (e.g., infinite) for pairs $(x,y)$ of interest,
and one should interpret the left and right sides (once multiplied by $\, \textrm{d}x\textrm{d}y$) as measures over the Cartesian (tensor)
product space $S\times S$ (see, e.g., \cite[Ch.~18]{Billingsley}).
Then
\eqref{eq:db} should be interpreted as the left and right side being equal as product measures,
which means the {\em weak} sense
\be
\int_A \int_B \pi(x) k(x,y) \, \textrm{d}x \, \textrm{d}y =
\int_A \int_B \pi(y) k(y,x) \, \textrm{d}x \, \textrm{d}y~, \qquad \mbox{ for all (measurable) subsets } A,B \subset S.
\label{eq:dbm}
\ee
For ease of reading we will write statements of the form \eqref{eq:db} about
kernels over $(x,y)$ that represent product measures, with the understanding that they should be interpreted as in \eqref{eq:dbm}.

Finally, we recall the crucial fact that detailed balance implies invariance, which follows by substituting \eqref{eq:db} into \eqref{eq:inv} then using \eqref{knrm}.
\footnote{A sketch of the proof using the weak sense \eqref{eq:dbm} would be: choose $A=S$, swap the order of integration as justified by Fubini's theorem, and use \eqref{knrm}, leaving a weak statement of \eqref{eq:inv} for all $B\subset S$.}

\subsection{Metropolis-Hastings}
\label{s:mh}

The MH algorithm involves a {\em proposal} kernel $q(x,y)$ which gives, for each starting state $x$,
the (normalized) pdf over proposed states $y$.
For now we will assume that $q(x,\cdot)$ is AC for each $x\in S$.
%, which we assume is AC for now, and is also normalized so $\int q(x,y) \, \textrm{d}y = 1$ for all $x\in S$.
The proposal is accepted with some $x$- and $y$-dependent probability $\al(x,y)$, in which case the next state is $y$, otherwise the next state remains as $x$.
Thus the transition kernel $k(x,y)$ defining the resulting Markov chain
is, for each $x$, a mixture of the pdf $q(x,y)$ and the (rejected) point mass at $y=x$,
\be
k(x,y) = q(x,y)\al(x,y) + \delta_x(y) r(x)~,
\label{eq:mhker}
\ee
where $r(x)$ is the probability of rejection. Here $\delta$ is the Dirac delta distribution
defined in Euclidean space by $\delta(x) = 0$, $\forall  x\neq 0$, and
$\int \delta(x) \, \textrm{d}x = 1$, and we use the notation $\delta_x(y) = \delta(x-y)$.
Since the second term in \eqref{eq:mhker} is $x\leftrightarrow y$ symmetric whatever
the form of $r(x)$,
then for detailed balance \eqref{eq:db} to hold, we only need the condition on the first term
\be
\pi(x) q(x,y) \al(x,y) = \pi(y) q(y,x) \al(y,x)~.
\label{eq:alrat}
\ee
In the case where $q$ is not AC, then \eqref{eq:alrat} should be taken in the
sense of equality of product measures described above.
If $q$ is AC and everywhere positive then
the standard MH acceptance formula
\be
\al(x,y) = \min\left(\frac{\pi(y) q(y,x)}{\pi(x) q(x,y)}, 1 \right)
%~, \qquad x,y\in S~,
\label{eq:al}
\ee
is the most efficient%
\footnote{Here we mean efficiency in the sense that any other has higher probability of rejection.
This is simply because, for each $x,y\in S$,
either $\al(x,y)$ or $\al(y,x)$ is 1, the largest
allowed value for a probability.}
choice of $\al$ that satisfies \eqref{eq:alrat}.

\subsection{Delayed rejection for Metropolis-Hastings}
\label{s:dr}

Here we summarize standard delayed rejection as introduced in \citep{Mirathesis,Tierney99,Green01}.
The idea is make a second proposal with kernel $q_2(x,s,y)$ to $y$ if
the first proposal $q_1(x,s)$ from $x$ to $s$ is rejected (see Fig.~\ref{f:dr}(a)).
Note that $q_2$ may depend on both the current state $x$ and the rejected state $s$.
The transition kernel analogous to \eqref{eq:mhker} must account for three possible ways
to end up at state $y$: i) acceptance of $q_1(x,y)$, for which one uses
the usual MH probability $\al_1(x,y)$ obeying detailed balance \eqref{eq:alrat};
ii) acceptance of the second proposal, which occurs with some new probability
$\al_2(x,s,y)$; and iii) rejection of this second proposal.
For cases ii) and iii) one
must marginalize over all possible rejected first tries $s$.
Thus the transition kernel is
\be
k(x,y) = q_1(x,y)\al_1(x,y) +
\int q_1(x,s) [1-\al_1(x,s)][q_2(x,s,y)\al_2(x,s,y) + r_2(x,s)\delta_x(y)] ds
~,
\label{eq:dr}
\ee
where $r_2$ is the probability of rejection of the second proposal\footnote{As before, its form will be irrelevant because it lies on the diagonal $x=y$, so will not affect DB.}.
The factors $q_1(x,s) [1-\al_1(x,s)]$ in the integrand are the probabilities of making the first proposal ($q_1$) then rejecting it ($1-\al_1$).
The goal is then to choose $\al_2(x,s,y)$ such that DB is satisfied for the kernel $k$ given by \eqref{eq:dr}.
We have already established that this holds for the first term and the $r_2$ term,
which leaves only the middle $q_2$ term.
Substituting this middle term into the DB condition \eqref{eq:alrat} gives
\bea
\int \pi(x) q_1(x,s) [1-\al_1(x,s)]q_2(x,s,y)\al_2(x,s,y) \, ds \;\; = \nonumber \\
 \qquad \int \pi(y) q_1(y,s') [1-\al_1(y,s')]q_2(y,s',x)\al_2(y,s',x)\, ds'
\label{eq:dbmhdr}
\eea
where $s$ and $s'$ are (unrelated) dummy integration variables.
As before, if $q_2(x,s,\cdot)$ is AC, then this condition must hold for almost all $x,y\in S$.
% later we need to interpret in product measure sense
As most clearly explained by Mira \cite[Sec.~5.2]{Mirathesis}, one way (but not the only way) to enforce this condition is
simply to {\em set the integrands equal}.\footnote{Alternatively, one can assume that there exists a differentiable and invertible mapping from $(x, s, y)$ to $(y, s', x)$, and apply a change of variables to identify a more generic acceptance equation that is not constrained to follow the same path via $s$ from $y$ to $x$, as in \citep{Green01}.
}
This gives
\be
\pi(x) q_1(x,s) [1-\al_1(x,s)]q_2(x,s,y)\al_2(x,s,y) =
\pi(y) q_1(y,s) [1-\al_1(y,s)]q_2(y,s,x)\al_2(y,s,x), 
\label{eq:al2rat}
\ee
which now must hold for (almost) all $x,y,s\in S$.
Then, again assuming AC proposal pdfs with everywhere positive densities, the acceptance probability for the second proposal that maintains DB with the least rejection is \citep{Tierney99}
\be
\al_2(x,s,y) = \min\left( \frac{\pi(y) q_2(y,s,x) q_1(y,s) [1-\al_1(y,s)]}
  {\pi(x) q_2(x,s,y) q_1(x,s) [1-\al_1(x,s)]}
,1 \right)~,
\label{eq:almhdr}
\ee
Note that when we propose delayed rejection for maps coming from HMC in the next section,
we will not be able to use this method of Mira and Tierney,
and will need to return to the general integral condition \eqref{eq:dbmhdr}.
For a MH proposal, as compared to Eq. \ref{eq:al},
this acceptance probability has an extra factor $\frac{q_1(y,s) [1-\al_1(y,s)]}{q_1(x, s)[1-\al_1(x, s)]}$
which balances the probability of the first proposal being rejected at $y$ and $x$ respectively. 

\subsection{MH with deterministic proposals given by maps}

Metropolis-Hastings is usually presented assuming absolutely continuous proposal densities, so that equation
\eqref{eq:al} may be formulated.
However, HMC involves MH proposals that are given by deterministic maps,
which are not AC, rendering \eqref{eq:al} meaningless in this setting.
%This point seems not well appreciated in the literature.
Thus, in this section we derive rigorously the condition on the acceptance probability guaranteeing detailed balance, for the relevant class of maps.
By a map we mean
a smooth function $F: S\to S$, which thus takes each state $x$ to its image state $y=F(x)$.
The corresponding proposal kernel is
\begin{equation}
q_F(x,y) = \delta(y - F(x))~,
\label{eq:propmap}    
\end{equation}
which simply places the entire unit point mass at the point $y=F(x)$, hence is deterministic.
We will need the following two definitions.
\begin{dfn}[Involution]
  A map $F:S\to S$ is an involution if $F^{-1} = F$ as maps,
  that is, $F^2=I$ where $I$ is the identity map.
\end{dfn}
\begin{dfn}[Volume-preserving]
  A map $F:S\to S$ is volume (Lebesgue measure) preserving if
  $$
  \int_B \, \mathrm{d}x = \int_{F(B)} \, \mathrm{d}x~,\qquad \mbox{ for all (measurable) subsets } B \subset S~,
  $$
  where $F(B):=\{F(x): \, x\in B\}$ denotes the image of the set $B$.
\end{dfn}
If $DF(x) \in \R^{n\times n}$ is the Jacobian derivative matrix with
elements $(DF(x))_{ij} = \partial F_i(x) / \partial x_j $, $i,j = 1,\dots, n$,
then it is a standard result that volume preservation is equivalent to
$|\det DF(x)| = 1$ for all $x\in S$, i.e., a unit Jacobian determinant everywhere.
(See, e.g., \cite[Thm.~17.2]{Billingsley}.)

If MH is performed using deterministic proposals coming from maps
that are in both of the above special categories, then there is
a particularly simple condition that the acceptance probability should obey for DB
to hold, as follows. A simple proof is provided in Appendix \ref{app:deterministic}.
\begin{lem}[MH using a deterministic volume-preserving involution]
  Let $\pi$ be an AC target density.
  Let $F$ be a volume-preserving involution.
  Then MH with the deterministic proposal kernel $q_F$ given by \eqref{eq:propmap},
  with acceptance probability $\al$ obeying
  \be
  \pi(x) \al(x,y) = \pi(y) \al(y,x)   \qquad \forall x,y \in S
  \label{eq:alrats}
  \ee
  has detailed balance with respect to $\pi$, and therefore has $\pi$ as an invariant density.
  \label{lem:map}
    \end{lem}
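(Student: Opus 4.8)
The plan is to verify the detailed-balance condition \eqref{eq:db} directly for the MH transition kernel built from $q_F$, working throughout in the weak product-measure sense \eqref{eq:dbm} demanded by the non-AC delta kernel. Exactly as in \eqref{eq:mhker}, the transition kernel is
\be
k(x,y) = \delta(y-F(x))\,\al(x,y) + \delta_x(y)\,r(x)~,
\ee
and, as already noted there, the rejection term $\delta_x(y)\,r(x)$ is supported on the diagonal $y=x$ and is manifestly symmetric under $x\leftrightarrow y$, so it satisfies DB whatever the form of $r$. It therefore suffices to establish DB for the proposal term, i.e.\ to show that $\pi(x)\,\delta(y-F(x))\,\al(x,y)$ and $\pi(y)\,\delta(x-F(y))\,\al(y,x)$ agree as measures on $S\times S$.

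First I would test both candidate measures against an arbitrary product of bounded functions $g(x)h(y)$ (equivalently, against the indicators $\mathbf 1_B(x)\mathbf 1_A(y)$ appearing in \eqref{eq:dbm}). On the left, the factor $\delta(y-F(x))$ collapses the $y$-integral at $y=F(x)$, giving $\int g(x)\,h(F(x))\,\pi(x)\,\al(x,F(x))\,\mathrm{d}x$; on the right, $\delta(x-F(y))$ collapses the $x$-integral at $x=F(y)$, giving $\int g(F(y))\,h(y)\,\pi(y)\,\al(y,F(y))\,\mathrm{d}y$.

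The crux is to transform the right-hand integral into the form of the left. Here I would substitute $y=F(u)$: because $F$ is an involution, $F(y)=F(F(u))=u$, which simultaneously turns every $F(y)$ into $u$ and every $y$ into $F(u)$; and because $F$ is volume-preserving, $|\det DF|\equiv 1$, so $\mathrm{d}y=\mathrm{d}u$ with no Jacobian factor. After relabelling $u\to x$ the right-hand integral becomes $\int g(x)\,h(F(x))\,\pi(F(x))\,\al(F(x),x)\,\mathrm{d}x$, which matches the left-hand integral term by term precisely when $\pi(x)\,\al(x,F(x))=\pi(F(x))\,\al(F(x),x)$. But this is exactly the hypothesis \eqref{eq:alrats} evaluated at $y=F(x)$. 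Since $g$ and $h$ were arbitrary, the two measures coincide, DB holds, and invariance of $\pi$ follows from the implication (DB $\Rightarrow$ invariance) recorded above.

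I expect the main obstacle to be bookkeeping rather than conceptual: one must keep careful track of which delta collapses which variable, and must use \emph{both} defining properties of $F$ in the change of variables — the involution to rewrite $F(y)$ as $u$ (so that the surviving arguments of $\pi$ and $\al$ line up with the left side) and volume preservation to eliminate the Jacobian. Omitting either one breaks the matching. Everything else is a routine manipulation of the Dirac kernel in the weak sense.
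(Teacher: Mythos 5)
Your proof is correct and takes essentially the same route as the paper's: both discard the symmetric rejection term, verify the proposal term of \eqref{eq:mhker} in the weak product-measure sense \eqref{eq:dbm} by sifting out the Dirac kernels, and match the two sides via a change of variables $y=F(u)$ that uses the involution property together with the unit Jacobian, with \eqref{eq:alrats} supplying the final pointwise identity at $y=F(x)$. The only differences are cosmetic — you test against products $g(x)h(y)$ rather than set indicators and apply \eqref{eq:alrats} after the change of variables instead of before — and do not alter the argument.
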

The key point here is that the formula \eqref{eq:alrats} for the acceptance probability does not depend on the function $F$ at all, just on the ratio of target densities.
We have not found this well explained in the literature.
This will allow us in the following sections to place HMC, and our proposed DRHMC method, on a rigorous footing.

\begin{figure}[t]  
\includegraphics[width=\textwidth]{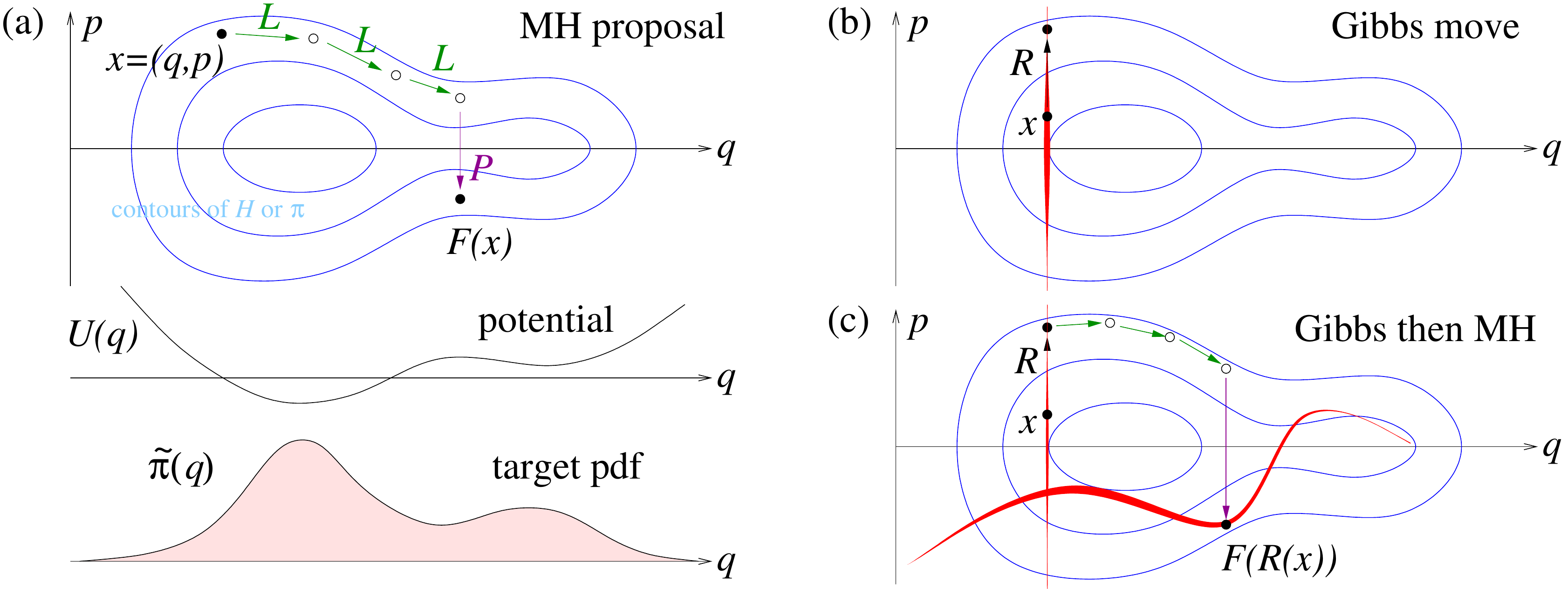}
\caption{Overview of HMC, sketched in $d=1$ dimensions. (a) shows the target density $\tilde\pi(q)$ (bottom), the
associated potential $U(q)$ (middle), and the resulting contours in 2D phase space $(q,p)$ of the Hamiltonian $H$
given by \eqref{H}. Each leapfrog step $L$ moves approximately along such a contour. For step 2 of HMC, the proposal move $F=L_\eps^nP$ is sketched, for $n=3$, where $P$ is the momentum flip. (b) shows the $p$ randomization (Gibbs move) in step 1 of HMC (red shows density of the kernel living on the $d$-dimensional slice $q = $ constant). (c) shows the composition of steps 1 and 2, comprising one HMC iteration (again red shows the resulting Markov kernel density, which lives on the union of a curved $d$-dimensional manifold and a constant-$q$ slice).
}\label{f:hmc}
\end{figure}

\subsection{Classical Hamiltonian Monte Carlo}
\label{s:hmc}

As our final piece of background, we review the Hamiltonian Monte Carlo (HMC) algorithm \citep{neal11}. 
We change the notation in this and the next section to overload $q$,
with $q\in \R^d$ now denoting the parameter vector of interest that is to be sampled.%
\footnote{Here we are following standard notation; we do not expect confusion to arise between $q$ as parameters, vs $q(\cdot, \cdot)$ as proposal function,
since the latter is always written as a function of {\em two} state points.}
The target pdf, which we call $\tilde\pi$, is assumed to be continuous and differentiable.
To draw samples $q$ from $\tilde\pi(q)$, HMC reinterprets the parameters of interest as
a position vector with associated potential energy function $U(q) = -\log \pi(q)$,
and simulates a Markov chain by approximating
the following Hamiltonian dynamics.
One introduces an auxiliary momentum vector $p\in \R^d$, which contributes a kinetic energy term $K(p) = \frac{1}{2}p^T M^{-1}p$, where $M$ is some symmetric positive definite mass matrix
that we take as fixed.
Then the Hamiltonian $H:\R^{2d} \to \R$ is the total energy function for the state $x := (q,p)$,
\be
H(x) = H(q,p) = U(q) + \frac{1}{2}p^T M^{-1} p~.
\label{H}
\ee
The state space $S = \R^{2d}$ is called {\em phase space};
see Fig.~\ref{f:hmc}(a) for an illustration.
Given initial data $x(0) = (q(0),p(0))$, the evolution of this physical system with respect to time $t$ is the first-order ODE system called Hamilton's equations,
\be
\left\{\begin{array}{lllll}\dot q &=& \nabla_p H(q(t),p(t)) &=& M^{-1} p(t) \\
\dot p &=& -\nabla_q H(q(t),p(t)) &=& -\nabla U(q(t))
\end{array}\right.
\label{flow}
\ee
where $\cdot = d/dt$ indicates the time derivative.
Intuitively, this motion is that of a point-mass ``rolling around'' in the potential well $U$,
in the absence of friction.
The force vector $-\nabla U$ attracts the ball so that it accelerates towards low-potential
(high-probability) regions.

HMC generates samples $x$ from the Gibbs pdf 
(also known as the Boltzmann or canonical distribution from statistical mechanics) defined by $H$, namely
\be
\pi(x) := Z^{-1} e^{-H(x)} = Z^{-1} e^{-U(q)} e^{-\half p^T M^{-1} p} = Z^{-1} \tilde\pi(q) e^{-\half p^T M^{-1} p}~,
\label{eq:Gibbs}
\ee
where $Z=\int_{\R^{2d}} H(x) \, \textrm{d}x = (2\pi)^{d/2}\sqrt{\det M}$ is the normalizing constant.
Note that, since $H$ was the sum of potential and kinetic terms, $q$ and $p$ are independent, with
the $q$-marginal of $\pi(x)$ being the target density $\tilde\pi(q)$.
Thus, given samples $x^{(i)}$ from $\pi$, by extracting their first $d$ coordinates one obtains samples from $\tilde\pi$.

HMC uses as its main step an MH step using a proposal from a particular deterministic
map $F$, which happens to
approximate Hamiltonian dynamics over a certain length of time $T$ followed by a negation
of the momentum.
The key property of this map---guaranteeing that is has the correct invariant
distribution $\pi$---will be that it is a volume-preserving involution;
the ancillary fact that it is an approximation to Hamiltonian dynamics is only relevant for creating a high {\em mixing rate} without excessive rejection in the MH acceptance step.
However, the exact dynamics is restricted to a
level set (energy shell) 
$H(q, p) = \textrm{constant}$ \cite[(2.13)]{neal11}, and staying permanently on this level set would {\em not} sample \eqref{eq:Gibbs}
correctly.
Thus, HMC alternates these Metropolis steps with a Gibbs sampling step that draws a fresh $p \sim \mathcal{N}(0, M)$.  This Gibbs update preserves the stationary distribution because the $p$ and $q$ terms factor in \eqref{eq:Gibbs}.
%The Gibbs update moves between energy shells, which is required for energy mixing.
Because the Metropolis and Gibbs updates both preserve the stationary distribution, so does their composition, which may be viewed as a single update in a Markov chain.

Let $L_\eps$ be the map that performs one leapfrog (Verlet) step with time step $\eps>0$.
Precisely, its action $(q',p') = L_\eps(q,p)$ is computed by the three sequential substeps,
\begin{align}
    \bar{p}     &\leftarrow\; p - \frac{\eps}{2} \nabla U(q)~,\nonumber \\
    q' &\leftarrow\; q + \eps M^{-1} \bar{p}~, \ \textrm{then} \nonumber \\
    p' &\leftarrow\; \bar{p} - \frac{\eps}{2} \nabla U(q') ~.
    \label{Leps}
\end{align}
The composition of $n = T/\eps$ such leapfrog steps is a $\bigO(\eps^2)$-accurate approximation to the exact dynamics \eqref{flow}
evolved to time $T$ (e.g. see \citep{neal11} for a derivation of the order of accuracy).
This composition is a volume-preserving involution, but does not conserve $H$ exactly.
Also we will need the ``momentum flip'' operator $P$ defined by $P(q,p) = (q,-p)$.

With these defined, a {\em single HMC iteration}
from the current state $x^{(i)}:=(q^{(i)},p^{(i)})$
comprises the two sequential steps: 
\begin{description}
    \item[Step 1. Gibbs sampling:] Resample the momentum $p^{(i)}$ from its Gaussian marginal distribution $p \sim \mathcal{N}(0, M)$, without changing $q^{(i)}$.\footnote{We use $\mathcal{N}(\mu, \Sigma)$ for normal distributions with location $\mu$ and covariance matrix $\Sigma$ and in the univariate case, $\mathcal{N}(\mu, \sigma^2)$ where $\sigma^2$ is the variance parameter.}
    This randomization step is shown as $R$ in Fig.~\ref{f:hmc}(b).
    (Note that there exist variants using partial randomization that we will not explore here
    \citep{neal11,Sohl-Dickstein14}.)
    
    \item[Step 2. Metropolis update:] Perform a Metropolis update on $x^{(i)}$, using a deterministic map
    $F = L_\eps^n P$ (here we compose operators to the right, so that $P$ is the final operator), where
    $n$ is a predetermined number of steps, $\eps>0$ is a time step, and
    $L_\eps$ and $P$ are the maps defined above. The proposal approximates Hamiltonian dynamics for time $T=n\eps$, followed by a $p$ flip, as sketched in Fig.~\ref{f:hmc}.
    Here, writing $x=x^{(i)}$ as the current state and $y=F(x)$ as the proposal, the step is accepted with probability
    $$
    \al(x,y) = \min\left(\frac{\pi(y)}{\pi(x)}, 1\right)  ~,
    $$
    being the most efficient rule satisfying \eqref{eq:alrats}.
    Upon acceptance $x^{(i+1)} \leftarrow y$, else $x^{(i+1)} \leftarrow x^{(i)}$.
 \end{description}
After each such iteration, $i$ is incremented, resulting in a Markov chain
$\{x^{(i)}\}_{i=0,1,\dots}$
from which expectations under $\tilde\pi$ may be estimated in the usual fashion \citep{geyer11}.

The following mathematical result, while covered recently using much more technical notation \citep{Andrieu20},
has a simple proof that we give in Appendix \ref{app:hmcproof}.
\begin{thm}[HMC has the correct invariant pdf]
  Let $\tilde\pi$ be a continuous, differentiable, positive pdf over $\R^d$,
  with associated Gibbs pdf $\pi$ over $\R^{2d}$ given by \eqref{eq:Gibbs}.
  The Markov chain with HMC update, given by the composition of
  steps 1 (Gibbs) and 2 (MH) defined above, has $\pi$ as an invariant pdf.
  \label{thm:hmc}
\end{thm}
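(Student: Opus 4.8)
The plan is to lean on the two structural results already established: that detailed balance implies invariance (noted just after \eqref{eq:db}), and Lemma~\ref{lem:map}, which reduces $\pi$-invariance of an MH step to checking that its proposal map is a volume-preserving involution and that the acceptance rule satisfies \eqref{eq:alrats}. Since one HMC iteration is the composition of step~1 (Gibbs) and step~2 (MH), and since composing two kernels that each leave $\pi$ invariant in the sense of \eqref{eq:inv} again leaves $\pi$ invariant (apply the operator form \eqref{eq:inv} twice and associate the integrations), it suffices to show that each of the two steps preserves $\pi$ separately.

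For step~1 I would argue directly from the product form $\pi(q,p) = Z^{-1}\tilde\pi(q)\,e^{-\frac{1}{2} p^T M^{-1}p}$ in \eqref{eq:Gibbs}. The Gibbs kernel leaves $q$ fixed and redraws $p\sim\mathcal{N}(0,M)$, whose density is exactly the $p$-factor of $\pi$; integrating the current $p$ out against this fixed marginal while leaving $q$ untouched returns $\pi$, so \eqref{eq:inv} holds for step~1. This is the short, routine part.

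The substantive work is step~2, where I would invoke Lemma~\ref{lem:map} with $F=L_\eps^n P$. Two properties must be verified. (i) \emph{Volume preservation}: each of the three substeps of a single leapfrog step \eqref{Leps} updates only one of $q,\bar p$ by shearing it by a function of the other, so each has a triangular Jacobian with unit diagonal and hence unit determinant; thus $L_\eps$, and therefore $L_\eps^n$, is volume-preserving, while $P$ has Jacobian $\mathrm{diag}(I,-I)$ with $|\det|=1$, so the composite $F$ is volume-preserving. (ii) \emph{Involution}: this is the crux. I would first record $P^2=I$, and then establish the time-reversibility identity $P L_\eps P = L_\eps^{-1}$ by a direct computation, inverting the three substeps of \eqref{Leps} and checking term by term that conjugating $L_\eps$ by the momentum flip reproduces the inverse leapfrog map. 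Granting this, $P L_\eps^n P = (P L_\eps P)^n = L_\eps^{-n}$, whence $F^2 = L_\eps^n (P L_\eps^n P) = L_\eps^n L_\eps^{-n} = I$ (the same cancellation holds for either ordering convention for $F$). Finally, the acceptance rule $\al(x,y)=\min(\pi(y)/\pi(x),1)$ manifestly satisfies the symmetry \eqref{eq:alrats}, since whichever of the two density ratios is at most $1$ makes both sides equal. Lemma~\ref{lem:map} then yields detailed balance, hence $\pi$-invariance, for step~2, and the composition argument completes the proof.

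I expect the reversibility identity $P L_\eps P = L_\eps^{-1}$ to be the only genuinely delicate point: it is where the symmetric half-step/full-step/half-step (Verlet) structure of \eqref{Leps} is essential, and it is precisely what forces the momentum flip $P$ into the definition of $F$. Everything else---volume preservation, the acceptance symmetry, the Gibbs step, and the composition---is routine.
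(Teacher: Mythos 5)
Your proof is correct and follows essentially the same route as the paper's: invariance of each step separately (the Gibbs update via the product form of \eqref{eq:Gibbs}, the Metropolis update via Lemma~\ref{lem:map}), with the volume-preserving involution property of $F=L_\eps^n P$ established exactly as in the paper's Lemma~\ref{l:LnP} --- unit-determinant triangular (shear) Jacobians for the substeps of \eqref{Leps}, plus the reversibility identity $PL_\eps P = L_\eps^{-1}$ conjugated up to $PL_\eps^n P = L_\eps^{-n}$. Your only cosmetic deviation is conjugating a single leapfrog step and raising to the $n$th power rather than reversing the $n$-step composition directly, and your explicit check that $\al(x,y)=\min(\pi(y)/\pi(x),1)$ satisfies \eqref{eq:alrats} (both sides equal $\min(\pi(x),\pi(y))$) makes a point the paper leaves implicit.
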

In short, the proof is that step 1 (Gibbs) and step 2 (MH) each independently
preserve $\pi$ as an invariant pdf,
thus so does their composition.
In particular for step 2 this hinges on Lemma~\ref{lem:map} applied to $F=L_\eps^n P$;
its approximation of Hamiltonian dynamics is irrelevant for the proof.
It is also a common misunderstanding that their composition (the HMC iteration)
obeys detailed balance: although steps 1 and 2 separately do, their composition in general
does not.

It is worth pointing out that while first-order leapfrog integration ($L$) of Hamilton's equations is the most commonly used proposal in HMC, it is not the only choice.  The leapfrog integrator itself can be extended to higher orders \citep{creutz1989higher,yoshida1990construction}. 
Neal points out that a modified Euler step is valid \citep{neal11}, and recent works have proposed using other maps, such as implicit integrators \citep{Pourzanjani19,brofos2021evaluating} for multiscale distributions
or generalizing HMC with neural networks \citep{Levy17}.  
However, a lesson of the above is that approximating Hamiltonian dynamics is not necessary to have the correct invariant pdf; it is merely a convenient way to propose long-distance moves with high acceptance rates.

\begin{figure}[t!]
\centering \includegraphics[width=0.9\textwidth]{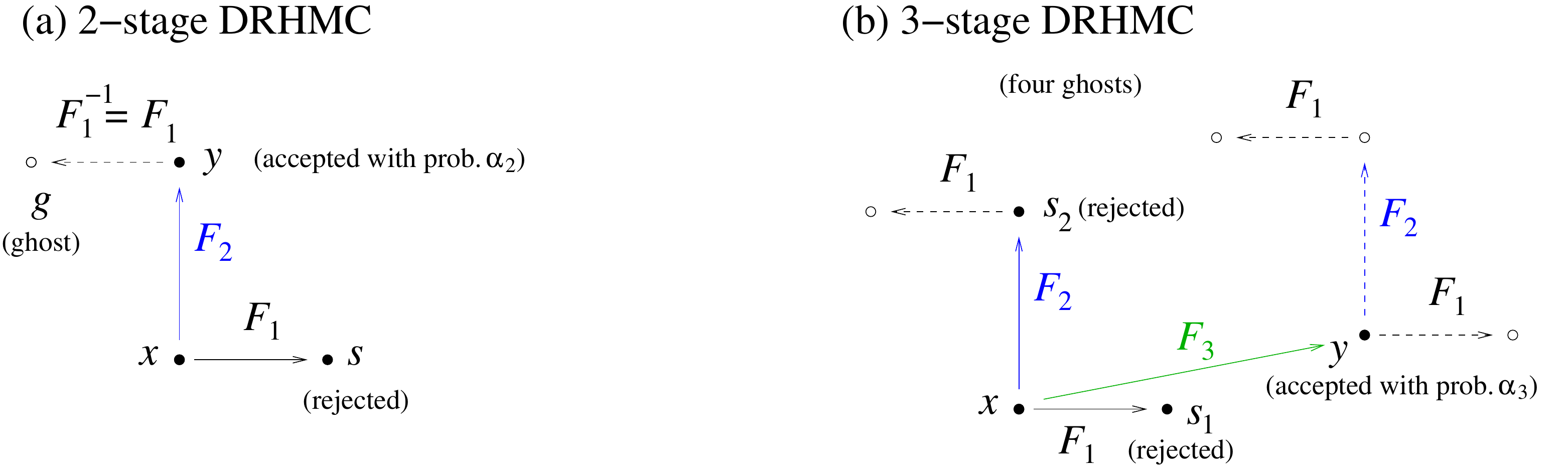}
\\
$\al_1(x,s)=\min\bigl( 1,\frac{\pi(s)}{\pi(x)} \bigr),
\;\;
\al_2(x,s,y)=\min\bigl( 1,\frac{\pi(s)}{\pi(x)}\frac{1-\alpha_1(y,g)}{1-\alpha_1(x,s)} \bigr);
\hfill \mbox{for $\alpha_3(x,s_1,s_2,y)$ see \eqref{eq:alhmcdr2}}
$
\caption{Sketch of states (in the
augmented state space $\R^{2d}$)
involved in delayed rejection HMC.
In this sketch, the locations are chosen purely to aid visualisation.
(a) Basic 2-stage scheme with proposal maps $F_1$ and $F_2$,
each of which is a certain number of leapfrog steps followed by a momentum-flip, hence an involution (see Section~\ref{sec:DRHMC}). The first map $F_1$ generates a proposal $s = F_1(x)$, which is accepted with probability $\alpha_1(x, s)$.  If the first proposal is rejected, the second proposal $y = F_2(x)$ is accepted with probability $\alpha_2(x, s, y)$.
The target density at the ``ghost'' $g=F_1(F_2(x))$ is also needed.
(b) 3-stage scheme, involving a third proposal map $F_3$
(see Section~\ref{sec:higherorder}).
The target density must be evaluated at $2^3=8$ states, four of which are ghosts (shown by open circles).
}
\label{f:dr}
\end{figure}

\section{Delayed rejection for HMC}
\label{sec:DRHMC}

Finally we have all the tools to combine delayed rejection (DR) with HMC.
We call the resulting algorithm DRHMC. 
As with classical HMC, we work with the extended state $x=(q,p)$ to sample from the desired distribution $\tilde\pi(q)$, which is the marginal of $\pi(x)$,
the resulting Gibbs pdf \eqref{eq:Gibbs} over the extended state.
As in Section~\ref{s:dr} we use $s$ to represent intermediate proposals in DR that have been rejected,
and $y$ will always represent the most recent proposal, i.e. the proposal made in the current DR stage. 

We keep the Gibbs step unchanged and apply DR only to the Metropolis step.
Consider $F_1 = L_\eps^n P$, a deterministic proposal map for some time step $\eps$ and
number of leapfrogs $n$.
The first acceptance probability remains the same as in classical HMC:
% HMC:
$\al_1(x,s) = \min \left[ \pi(s)/\pi(x), 1 \right]$.
If this first proposal with kernel $q_1(x,s) = \delta(s-F_1(x))$ gets rejected,
this suggests a possibility that $\pi(s)/\pi(x)$ is much less than 1, indicating very poor
approximate energy conservation so that $\eps$ was too large for stable integration.
This motivates a second proposal via a mapping $F_2$ which uses a smaller $\eps$.
The resulting second kernel is $q_2(x,s,y) = \delta(y-F_2(x))$, which is independent of $s$.

We now derive the detailed balance condition for $\al_2$ in a general setting.
We assume only that the maps $F_1$ and $F_2$ are volume-preserving involutions,
which is satisfied for HMC maps as discussed in the previous section.
%Then, since the first proposal is independent of the second proposal, 

For the second proposal, recall the general detailed balance condition \eqref{eq:dbmhdr} for delayed rejection. Substituting the above deterministic $q_1$ and $q_2$ kernels into this gives
\bea
\int_S \pi(x) \delta\big(s - F_1(x)\big) \, [1-\al_1(x,s)]\delta\big(y - F_2(x)\big)\al_2(x,s,y)  \mathrm{d}s \; = \quad\quad && \nonumber \\
 \int_S \pi(y) \delta\big(s' - F_1(y)\big) [1-\al_1(y,s')]\delta\big(x - F_2(y)\big)\al_2(y,s',x)  \mathrm{d}s'  \,.  &&\nonumber
\eea
However simply setting the integrands equal, as done by Mira and Tierney
to get \eqref{eq:al2rat}, 
fails here since the LHS delta selects $s=F_1(x)$ and the RHS delta selects $s'=F_1(y)$, but $F_1$ is injective so $s=s'$ could only hold if $x=y$.
Instead one {\em evaluates} the two integrals to get 
$$
\pi(x)[1-\al_1(x,F_1(x))]\delta(y-F_2(x)) \al_2(x,F_1(x),y) \; = \;
\pi(y)[1-\al_1(y,F_1(y))]\delta(x-F_2(y)) \al_2(y,F_1(y),x) ~,
$$
which must hold as kernels over $(x,y)$. Yet since $F_2=F_2^{-1}$, the two delta distributions
are the same, so
equality holds as singular measures living on the manifold $y=F_2(x)$ if $\al_2$ satisfies
\be
\pi(x)[1-\al_1(x,F_1(x))]\al_2(x,F_1(x),y) \; = \;
\pi(y)[1-\al_1(y,F_1(y))]\al_2(y,F_1(y),x) ~,  \quad \mbox{ for }\; y=F_2(x)~.
\label{eq:alrathmcdr}
\ee
To maximize the acceptance rate while obeying this constraint we set
\be
\al_2(x,F_1(x),y) \;=\; 
\min\!\left(1,  \frac{\pi(y)}{\pi(x)} \frac{1-\al_1\big(y,F_1(y)\big)}{1-\al_1\big(x,F_1(x)\big)} \right) ~,  \quad \mbox{ for }\; y=F_2(x)~.
\label{eq:alhmcdr}
\ee
Since all proposals are deterministic in DRHMC, it is now useful to simplify notation by
folding the known image points into the acceptance probabilities,
\bea
\tal_1(x) &:=& \al_1(x,F_1(x))
% =  \min \left[ \pi(F_1(x))/\pi(x), 1 \right]$          maybe include this, since always true
\label{ta1}
\\
\tal_2(x) &:=& \al_2(x,F_1(x),F_2(x))
\label{ta2}
\eea
This allows us to write the second acceptance probability obeying detailed balance as
\be
\tal_2(x) \;=\; 
\min\!\left(1,  \frac{\pi(F_2(x))}{\pi(x)} \, \frac{1-\tal_1(F_2(x))}{1-\tal_1(x)} \right)~.
\label{eq:talhmcdr}
\ee
Note that this is the same as the acceptance relation \eqref{eq:almhdr} from plain DR in the Metropolis case,
but setting all the proposal densities $q$ to unity.
However, we emphasise that its derivation is quite different, requiring care with deterministic maps, and relying on them being volume-preserving involutions.
In addition to the initial point of the trajectory $x$ and the two proposals $F_1(x)$ and $F_2(x)$, 
this rule demands, via $\tal_1(F_2(x))$, the pdf at a fourth state $g=F_1(F_2(x))$.
This is the first proposal that would have been made in a hypothetical chain,
had we started the chain in the reverse direction i.e. starting from $y$ to go to $x$.
Hence we call it a \textit{ghost preimage} of the second proposal. See Fig.~\ref{f:dr}(a).
While it is never proposed in the forward direction,
maintaining DB requires us to evaluate the density at this point. 

Finally, we describe the form of the new proposals that we test.
We consider delayed rejections
which reduce the step size of the leapfrog integrator by a constant {\em adaptivity factor} $a>1$
but maintain the same trajectory length or time of integration ($T$).
Hence we will propose $F_2 = L_{\eps/a}^{an} P$.
In Section~\ref{sec:experiments} we will show that this allows us to explore
regions in the phase space that otherwise face persistent rejections with classical HMC.
This completes the simplest form of DRHMC; however, we
find that the higher-order proposals described next can also help.

\subsection{Higher order proposals}
\label{sec:higherorder}

The previous section focused on making a second proposal when the first proposal in HMC gets rejected. 
The same formalism can be extended to allow a third proposal upon rejection of the second, a fourth upon rejection of the third, and so on.  In this section, we explicitly derive the acceptance probability for the third proposal in DRHMC
and give a general recursive relation for $k^{th}$~proposal. 
Mira \citep{Mirathesis} presents similar acceptance probabilities for higher-order delayed proposals in the Metropolis-Hastings case.
We also discuss the growth of the cost with number of proposals, since this determines the
trade-off with increased acceptance rate of DRHMC. 

If we reject the first two proposals starting from a state $x\in S$, namely $s_1=F_1(x)$ and $s_2=F_2(x)$,
we make a third proposal via a map $F_3$.
The resulting proposal kernel is $q_3(x,s_1,s_2,y) = \delta(y-F_3(x))$.

In this case, the transition kernel analogous to \eqref{eq:dr} must account for
four possible ways to end up at a state $y$: accepting the i) first, ii) second or the iii) third proposal
with their respective acceptance probabilities or iv) rejecting all and maintaining the current state. 
We have established the acceptance probabilities of case i) and ii) in the previous section.
For cases iii) and iv), the transition kernel must now marginalize over
all possible rejected first and second proposals $s_1, \, s_2$,
\bea
k(x,y) &=& q_1(x,y)\, \al_1(x,y) \\
&&\null + \int q_1(x,s_1) \, [1-\al_1(x,s_1)] \, q_2(x,s_1,y) \, \al_2(x,s_1,y) \, \textrm{d}s \\
&&\null + \int q_1(x,s_1) \, q_2(x,s_1,s_2) \, [1-\al_1(x,s_1)] \, [1-\al_2(x,s_1,s_2)] \\
\null  && \null \qquad \quad \null \times \left[q_3(x,s_1,s_2, y)\,\al_3(x,s_1,s_2,y) + r_3(x) \, \delta_x(y) \right]  \, \textrm{d}s_1 \, \textrm{d}s_2,
\label{eq:dr2}
\eea
where $r_3$ is the probability of rejecting the 3rd proposal.

As we saw in the previous section, since the first and second proposals are independent of the third proposal, 
their acceptance probabilities $\al_1$ and $\al_2$ are given by \eqref{eq:alrat} and \eqref{eq:alrathmcdr} respectively
and hence the first two terms of Eq. \eqref{eq:dr2} will maintain DB.
As with $r_2$, since $r_3$ also lies on the diagonal, it will also maintain DB regardless of its form.
Thus to maintain detailed balance for the third proposal, we only need the condition on $\alpha_3$,

\begin{align}
\int_S \int_{S} & \pi(x) \delta\big(s_1 - F_1(x)\big) [1-\tal_1(x)]  \delta\big(s_2 - F_2(x)\big)[1-\tal_2(x)] \delta\big(y - F_3(x)\big)\tal_3(x)\mathrm{d}s_1 \mathrm{d}s_2 =  \nonumber \\
&\int_S \int_{S}  \pi(y) \delta\big(s_1' - F_1(y)\big) [1-\tal_1(y)]  \delta\big(s'_2 - F_2(y)\big)[1-\tal_2(y)] \delta\big(x - F_3(y)\big) \tal_3(y)\mathrm{d}s'_1 \mathrm{d}s'_2   
\end{align}
where we have simplified the notation for acceptance probability via $\tal_3(x) := \al_3(x,F_1(x),F_2(x),F_3(x))$.
Then following the same steps as in the derivation of $\tal_2$ and evaluating the two integrals
allows us to write the the 3rd acceptance probability as 
\be
\tal_3(x) = \min\left[ \frac{\pi(y)  [1-\tal_1(y)]  [1-\tal_2(y)]}{\pi(x)  [1-\tal_1(x)] [1-\tal_2(x)]} , 1 \right]~.
\label{eq:alhmcdr2}
\ee

Continuing in this way, one can write down a recursive relation
for the acceptance probability of the $k${th} proposal obeying detailed balance,
\be
\tal_{k}(x) = \min\left[ \frac{\pi (y) \prod_{i=1}^{k-1}[1-\tal_i(y)]}{{\pi(x)  \prod_{i=1}^{k-1}[1-\tal_i(x)]}}, 1 \right]~,
\ee
where $y = F_k(x)$, and $\tal_{k}(x) := \al_{k}(x, F_1(x),...,F_{k}(x))$ is the notational shorthand. 

\subsubsection{Growth in cost with respect to the number of proposals}

It can be tempting to keep making successively higher order proposals to increase the acceptance rate, 
however %there are no free lunches and
it is important to be mindful of their increasing cost.
Thus we explicitly write down the full-form of $\tal_1$ and $\tal_2$ in the acceptance probability for the third proposal again to see the various ghost preimages that need to be evaluated.
Recall that 
\be
\tal_2(x) = \min\left[ \frac{\pi(F_2(x)) [1-\tal_1(F_2(x)))]}{{\pi(x)  [1-\tal_1(x)]}}, 1 \right] \nonumber
\ee
where
\be
\tal_1(x) = \min\left[ \frac{\pi(F_1(x))}{{\pi(x)}}, 1 \right]~. \nonumber
\ee

Substituting these forms in \eqref{eq:alhmcdr2}
we see that the denominator involves estimating the density at points
$x$, $F_1(x)$, $F_2(x)$ and $F_1(F_2(x))$
while the numerator requires the density at
$F_3(x)$, $F_1(F_3(x))$, $F_2(F_3(x))$, and $F_1(F_2(F_3(x)))$.
Of these $2^3=8$ points, only $F_1(x)$, $F_2(x)$ and $F_3(x)$ are proposals made in DRHMC
and the remaining states are the various ghost preimages that need to 
be evaluated to maintain detailed balance. 
Their form is sketched in Fig.~\ref{f:dr}(b).  
Evaluating the acceptance conditions for the $k$th proposal requires $2^k$ log density evaluations.  In our algorithm, computation is dominated by the number of gradient evaluations.\footnote{With automatic differentiation, the log density evaluations come for free with the gradient calculations.}

% Despite this apparent exponential growth in cost, the cost of DRHMC is only a {\em constant} factor larger than classical HMC run at the locally optimal step size.
% Consider the first DRHMC proposal $F_1$ with $n$ leapfrog steps of size $\eps$, i.e., a trajectory time of $T=n\eps$.
% If $\eps$ is greater than the largest stable step size for that trajectory,
% this first proposal will very likely be rejected, due to instability giving very poor $H$ conservation hence a tiny acceptance ratio.
% Our proposed higher-order DRHMC scheme then makes proposals with a sequence of step sizes $\eps/a, \eps/a^2, \dots$,
% so that the first that is likely to be accepted is the $k$th, 
% where $k$ is the smallest integer such that $\eps/a^{k-1}$ gives stable leapfrog integration.
% The total number of leapfrog steps needing for a $k$th order DRHMC proposal
% is $2^{k-1}n + 2^{k-2} an + \dots + a^{k-1} n$,
% where the first term is the for $F_1$ maps, the second for the $F_2$ maps, etc.
% (See Fig.~\ref{f:dr}(b) for the $k=3$ case.)
% This sum is $nka^{k-1}$ for $a=2$, or $\bigO(a^{k-1}n)$ for $a>2$.
% However, the final step size $\eps/a^{k-1}$ is within a factor of $a$
% of the optimal step size for HMC, so that
% even the optimal choice would need at least $a^{k-2}n$ steps.
% Thus for $a=2$, the DRHMC cost is only $\bigO(ak)$ more than
% HMC with the optimal step size, and for $a>2$, the DRHMC cost is $\bigO(a)$
% (independent of $k$) more than HMC.
Despite this apparent exponential growth in cost, the cost of DRHMC is only a {\em constant} factor larger than classical HMC run at the locally optimal step size.
For instance, consider a DRHMC setup where the first proposal is $F_1$ with $n$ leapfrog steps of size $\eps$, 
i.e., a trajectory time of $T=n\eps$,
and a sequence of step sizes $\eps/a, \eps/a^2, \dots$ for subsequent higher order proposals upon rejection. 
Let $k$ be the smallest integer such that $\eps/a^{k-1}$ gives stable leapfrog integration.
Then even for this optimal step size, classical HMC would need at least $a^{k-2}n$ steps.
On the other hand for DRHMC, since $\eps$ is greater than the largest stable step size for this trajectory,
this first proposal will very likely be rejected due to instability giving very poor $H$ conservation and hence a tiny acceptance ratio.
Our proposed higher-order DRHMC scheme then makes proposals with aforementioned sequence of step sizes $\eps/a, \eps/a^2, \dots$,
so that the first that is likely to be accepted is the $k$th with step size $\eps/a^{k-1}$.
The total number of leapfrog steps needed up to (and including) a $k$th order DRHMC proposal
is $2^{k-1}n + 2^{k-2} an + \dots + a^{k-1} n$,
where the first term is for the $F_1$ maps, the second for the $F_2$ maps, etc.
(See Fig.~\ref{f:dr}(b) for the $k=3$ case.)
This sum is $nka^{k-1}$ for $a=2$, or $\bigO(a^{k-1}n)$ for $a>2$.
Thus for $a=2$, the DRHMC cost is only $\bigO(ak)$ more than
HMC with the optimal step size, and for $a>2$, the DRHMC cost is $\bigO(a)$
(independent of $k$) more than HMC.

Thus we may summarize as follows.
\begin{rmk}
Although $k$th-order DRHMC has a cost per proposal that grows exponentially in $k$, the cost is only a constant factor more expensive than classical HMC proposals made with the ``correct'' (largest stable) step size.
\label{r:kcost}
\end{rmk}

\subsection{Probabilistic Delayed Rejection}
\label{sec:probdrhmc}

One way to reduce the average cost per iteration for DRHMC is to make the delayed rejections probabilistic and dependent on where we are in the distribution.
To motivate how this can be helpful, consider a case
when the cost of secondary proposal is much higher than the first proposal and even though
the first proposal function is well tuned for most of the state space, 
there are certain hard regions which can only be sampled by the second proposal. 
In this scenario, while we need DR to correctly sample the full distribution, we do not need it throughout the phase space.
Every time we make a secondary proposal upon getting a rejection in the good regions, we might not be trading excess cost with higher acceptance rate effectively. 
Thus instead of making the second and subsequent proposal
mandatory upon a rejection,
we would like to make them probabilistic such that we 
make a second proposal with probability $p_2(x, s)< 1$.
This modifies the second proposal kernel to $q_2(x,s,y) = p_2(x, s) \delta(y-F_2(x, s))$.
As was the case for the second proposal map $F_2$, this probability can also be informed by the previously rejected proposals in the same trajectory.
One can follow the steps from the previous section to maintain detailed balance and show that this modifies the acceptance probability as
\be
\tal_2(x) = \min\left[ \frac{\pi(y) [1-\tal_1(y)]  p_2\big(y, F_1(y)\big)}{\pi(x)  [1-\tal_1(x)]  p_2\big(x, F_1(x)\big)} , 1 \right]
\label{eq:alhmcdrp} 
\ee

Returning to the scenario outlined above, we see that one way to avoid secondary proposals in good regions is to construct a proposal probability that makes it less likely for a secondary proposal if the first proposal was rejected on random chance despite having high acceptance probability. 
On the other hand, if the first proposal was rejected strongly,
which might indicate that we are in a bad region of the state space for the first proposal, 
we make it  more likely to make a subsequent proposal with a new function. 
A simple heuristic proposal probability to achieve this is
% This can be achieved with the proposal probability
\be
% p_{j+1}(x, s) = \mathrm{max}\Bigg[0, 1 - \frac{\pi(s_j)}{\pi(x)}\Bigg]
p_{j+1}(x, s_j) = 1 - \alpha(x, s_j),
\label{eq:drprob}
\ee
where $s_j$ is the $j^{th}$ proposal made from the current position $x$, 
and $\alpha(s_j)$ is the acceptance probability of the last proposal.
Ideally however one would choose the proposal probability $p_{j+1}$ 
to maximize expected squared jump distance over effort for the next proposal.
% and Eq. \ref{eq:drprob} is just a simple heuristic that doesn't
% rely on estimating expected acceptance probabilities for higher order proposals. 
Detailed balance is maintained by including this factor $p_{j+1}$
in the acceptance condition $\alpha_{j+1}$ for the $j+1^{th}$ proposal along the lines of Eq. \ref{eq:alhmcdrp}.
In the experiments section, we will show how probabilistic delayed rejection can preserve the efficiency of basic HMC for simple distributions where HMC is effective.

\section{Experiments}
\label{sec:experiments}

In this section, we compare the performance of delayed rejection HMC (DRHMC) to that of standard HMC.

\subsection{Setup}
\label{sec:setup}
Given a current state $x$, HMC makes a proposal $y=F_1(x)$
where $F_1 = L_\eps^n P$ is the deterministic mapping that integrates Hamiltonian dynamics
with leapfrog integration for $n$ steps and step size $\eps$.
In DRHMC, we consider the first proposal to be the same as in HMC.
Upon rejection of the first proposal, we make $k-1$ subsequent proposals.
For each of these, we reduce the step size by a fixed factor $a>1$
while increasing the number of steps in proportion, to maintain a constant integration time.
This corresponds to a deterministic mapping,
\[
F_k(x) = L_{\eps a^{-(k-1)}}^{na^{k-1}} P(x).
\]

For every experiment and configuration, we run 50 chains with 1000 iterations for burn-in followed by 20,000 sampling iterations.

\subsubsection{Choice of parameters}
HMC has three tuning parameters, the step size $\eps$, the number of leapfrog steps $n$,
and the mass matrix $M$.  The total integration time is $T = n\eps$.

We use Stan \citep{Stan} to tune the reference values of these parameters using the following two steps.
\begin{enumerate}
    \item  We use the no-U-turn sampler (NUTS) \citep{NUTS} to select the integration time $T$.
NUTS is an adaptive algorithm that automatically stops every leapfrog trajectory when it starts to double back and retrace its steps and biases draws along the trajectory to later in the trajectory in an attempt to maximize expected squared jump distance.
Therefore, NUTS does not require tuning for $T$ during the warm-up phase. 
Following \citep{Wu2018}, we choose time of integration $T$ to be the 90th percentile of the trajectories followed by NUTS.\footnote{Unlike \citep{Wu2018}, we do not jitter the number of leapfrog steps.}
    \item After fixing $T$, we re-run Stan with HMC to estimate the optimal step size $\eps_{\mathrm{f}}$ and a diagonal mass metric, $M$.
\end{enumerate}

In addition to the HMC tuning parameters for integration time and step size, DRHMC has tuning parameters $k$ for the total number of 
of subsequent proposals made and
$a$ for the divisor by which step size is reduced for every subsequent proposal.
To develop an understanding of how these parameters impact the performance of DRHMC,
we report results for the grid of configurations with $k \in \{2, 3, 4\}$ and $a \in \{2, 5, 10\}$.

With its ability to reduce step sizes in subsequent proposals, DRHMC is more robust to the initial tuning of step size.  To demonstrate this, we evaluate HMC and RHMC with fixed and initial step sizes at, above and below the adapted step size, $\eps_0 =  0.5 \eps_{\mathrm{f}},\, \eps_{\mathrm{f}}, 2\, \eps_{\mathrm{f}}, 5\, \eps_{\mathrm{f}}$.

\subsubsection{Metric of comparison}
To measure sampling performance, we report the umber of log density and gradient evaluations required per effective draw, that is,
\begin{equation}
    \mathcal{C} = \frac{N_{\textrm{evals}}}{\mathrm{ESS}},
\label{eq:cost}
\end{equation}
where $N_{\textrm{evals}}$ is the total number of log density and gradient evaluations in the Markov chain, and $\mathrm{ESS}$ is the effective sample size for a parameter estimate extracted from the chain.  Log density and gradient evaluations dominate the cost of HMC, allowing us to ignore other implementation details.  Thus $\mathcal{C}$ is the inverse of efficiency; smaller $\mathcal{C}$ is better. Its value will depend on the expectation being evaluated, so we report results for posterior means of parameters $\theta$ and their squares $\theta^2$, the latter of which measures performance in estimating variance.

If $\rho_t \in (-1, 1)$ is the autocorrelation of a quantity in the Markov chain at lag $t$, the effective sample size is
\begin{equation}
    \mathrm{ESS} = \frac{N}{1 + 2 \sum_{t = 1}^{\infty} \rho_t}\ ,
\label{eq:essr}
\end{equation}
where $N$ is the total number of iterations \citep{geyer11}.  Standard errors for estimating parameters are then derived from the MCMC central limit theorem as as 
\[
\textrm{se} = \textrm{sd} / \sqrt{\textrm{ESS}}\ ,
\]
where $\textrm{sd}$ is posterior standard deviation.  The central limit theorem states that as effective sample size grows, errors approach a normal distribution,
\[
\widehat{\theta} - \theta
\sim \mathcal{N}(0, \textrm{se}^2)\ .
\]
This is usually a reasonable approximation even for modest effective sample sizes.  
Alternatively, if we know the true posterior mean value $\theta$, we can run independent Markov chains and calculate errors  $\widehat{\theta} - \theta$.  The sample standard deviation of the errors can be used to estimate $\textrm{se}$, from which we can back out effective sample size as
\[
\textrm{ESS} = \left(\frac{\textrm{sd}}{\textrm{se}}\right)^2.
\]

In the following experiments, depending on whether we have access to the true parameter distributions,
we will show results in terms of cost per effective sample calculated by autocorrelation length ($\mathcal{C}_{r}$) or estimated through errors in cases where posterior means and variances are known ($\mathcal{C}_c$).
In experiments with more than one parameter being sampled, we will show the cost for the parameter that mixes the slowest in the sense of having the lowest effective sample size.  We run multiple Markov chains and measure per-chain variation in cost by applying the bootstrap technique across chains.

\subsection{Neal's funnel}

\begin{figure}[t!]
\includegraphics[width = \textwidth]{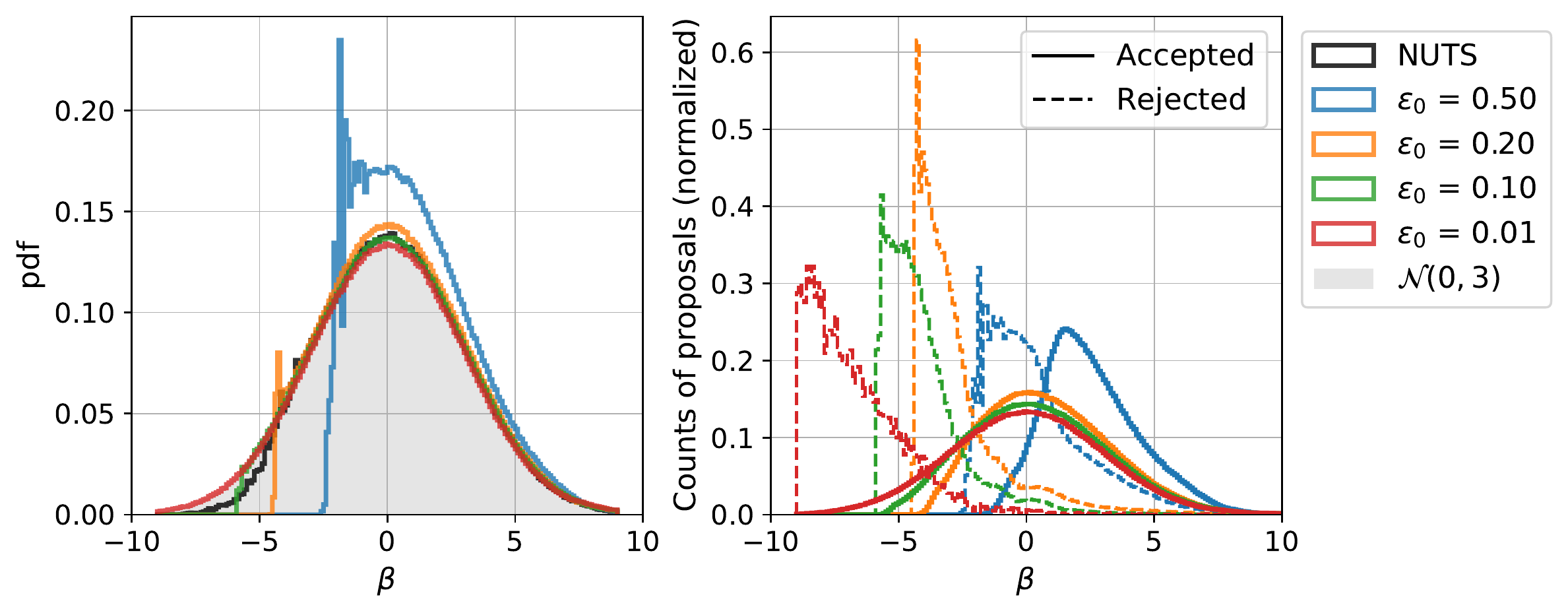}
\vspace{-20pt}
\caption{Difficulty of sampling Neal's funnel in $d=20$ dimensions. 
(Left) The marginal for $\beta$, which is  $\mathcal{N}(0, 3^2)$ as shown in gray, when the funnel is sampled with NUTS (default settings and HMC for different step sizes.
(Right) The fraction of accepted and rejected proposals for HMC as a function of $\beta$ for different step sizes.
All runs are done for 50000 samples and histograms are generated with bin-width of 0.1.}\label{fig:funnelhmc}
\end{figure}

We begin our experiments with the problem of sampling Neal's funnel,
upon which we touched in the introduction.
In $d$ dimensions, given a variance $\sigma^2$, 
we are interested in sampling $q=\{\beta, \alpha_2, \ldots, \alpha_d\}$ from Neal's funnel distribution \citep{Neal2000}, which is defined by
\begin{align}
\beta &\sim \mathcal{N}(0, \sigma^2) \nonumber \\
\alpha_i &\sim \mathcal{N}(0, e^{\beta}), \quad i = 2, 3, \dots, d,
\label{eq:funnel}
\end{align}
where as usual we use $\mathcal{N}(\mu,\sigma^2)$ to denote a normal pdf with mean $\mu$ and variance $\sigma^2$.
The resulting target pdf is
$$
\pi(\beta, \alpha_2, \dots, \alpha_d)
= \mathcal{N}(\beta \mid 0, \sigma^2)
  \prod_{i = 2}^d \mathcal{N}(\alpha_i \mid 0, \exp(\beta)).
$$  

Following Neal we set $\sigma=3$.
This distribution has equal probability mass in the regions $\beta<0$ and $\beta>0$.   
However the distribution has a wide range of length scales due to the curvature changing as $\beta$ ranges from large to small values (see Fig.~\ref{fig:funnel}).
This makes it challenging to sample the funnel efficiently with a constant step size. 
We can illustrate this with the help of Figure \ref{fig:funnelhmc},
which shows the empirical marginal of parameter $\beta$ when sampling the funnel in $d=20$ dimensions
with NUTS and HMC for different settings.
The correct marginal for $\beta$ is $\mathcal{N}(0, 3^2)$ which means that about $\sim 5\%$ of samples
should lie at $\beta<-5$.
However even for $\epsilon_0=0.2$, there are no samples in this regime. 
For NUTS, to push to $\beta<-5$, the step size had to be reduced such that $99\%$ of all proposals are accepted, 
as compared to $80\%$ default value of Stan and $65\%$ fraction considered optimal for normal distributions in HMC \citep{beskos2013optimal}.
To explore beyond the $3\sigma$ region, as required to get the right results for modest tail statistics, we need to reduce the step size further to $\eps=0.01$.

The extremely small step size necessary to explore the neck of the funnel
is very inefficient for exploring the mouth of the funnel.
As $\beta$ grows, the marginal $p(\alpha)$ approaches a lognormal distribution with $\sigma = 3$, and thus has long tails. 
The expected value of $\alpha^2$ is on the order of $10^2$, whereas the expectation of $\alpha^4$ is on the order of $10^8$. 
Due to the scale of the mouth of the funnel, the optimal step size is much larger than $\eps=0.01$ required to sample the neck of the funnel.  
Figure~\ref{fig:funnelhmc} provides an illustration of how well HMC can cover the mouth and neck of the funnel based on step size (left panel), as well as a comparison of the densities of accepted and rejected proposals for various step sizes (right panel).  All of the step sizes are able to sample the mouth of the funnel, however inefficiently, but in the neck of the funnel, acceptance rate dwindles to a sharp cutoff below which HMC is unable to sample.  To sample the tails of $\beta < 3 \cdot \textrm{sd}[\beta]$, we need to reduce step size even further below $\eps=0.01$. 

\begin{figure}[t!]
\centering
\includegraphics[width = 0.9\textwidth]{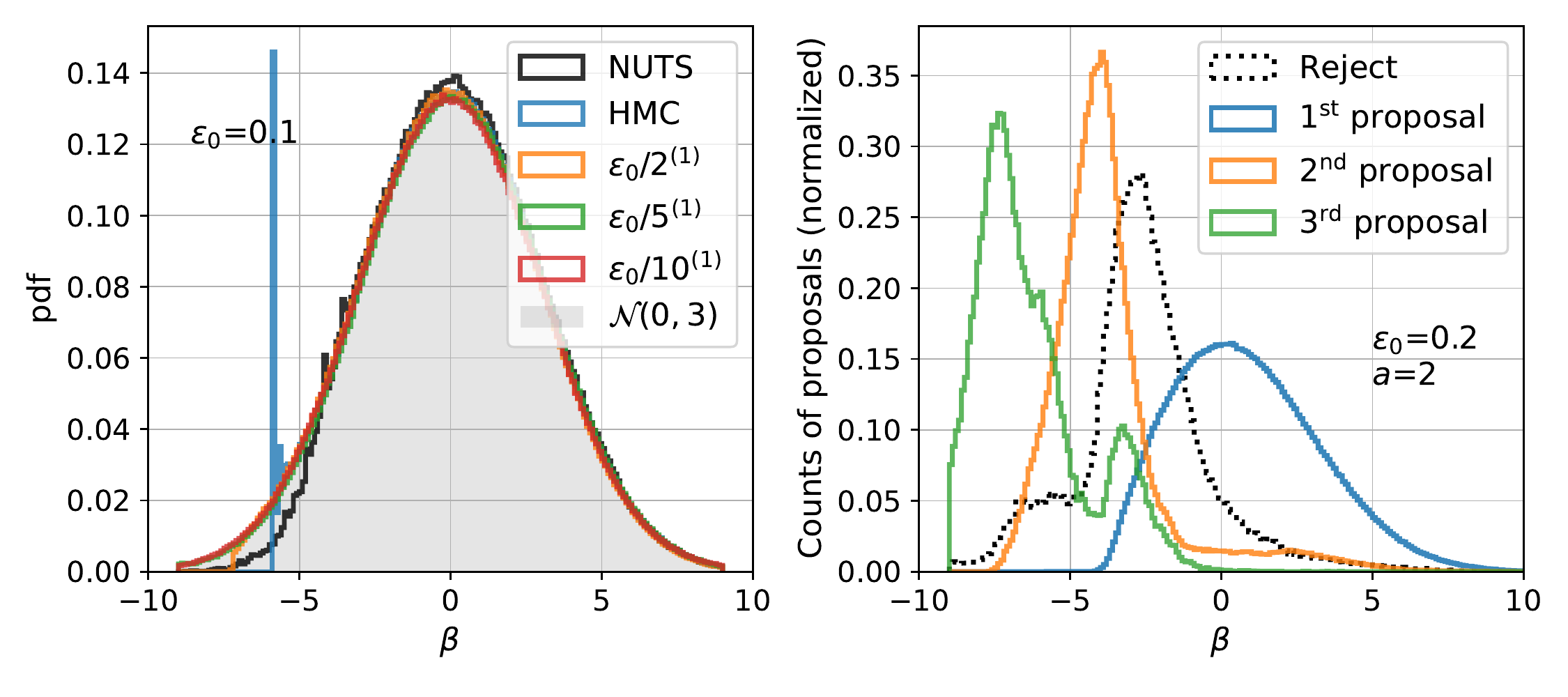}
\vspace{-10pt}
\caption{DRHMC for Neal's funnel in 20 dimensions. 
(Left) The marginal for $\beta$ sampled by NUTS, HMC and DRHMC with initial step size $\eps_0=0.2$
and second proposal with step size reduced by factors a=2, 5, 10.
% The legend indicates NUTS with default settings and  HMC for different step sizes.
(Right) The fraction of accepted and rejected proposals for  HMC as a function of $\beta$ 
in different stages of DRHMC when first step size $\epsilon_0=0.2$ and step size is reduced by factor $a=2$
at every stage. 
All runs are done for 50000 samples and histograms are generated with bin-width of 0.1.
}
\label{fig:funneldhmc}
\end{figure}

Figure \ref{fig:funneldhmc} shows how DRHMC can mitigate the sharp cutoff in the neck of the funnel by reducing step size as needed.  The left plot shows the marginal density $p(\beta)$ sampled with NUTS and HMC for step size $\eps_0=0.1$, as well as one-retry DRHMC with different stepsize reduction factors, $a$.
When the second proposal step size is reduced by factor of 10, DRHMC is able to sample $\beta$ to $\pm 3\sigma$.
The right panel shows the density of rejections and acceptances for the first, second, and third proposals of DRHMC with a larger step size ($e_0 = 0.2$), but allowing multiply retries with a reduction of $a = 2$.  With the possiblity of three proposals, an initial step size of $e_0 = 0.2$ is also able to sample $\beta$ to $\pm 3\sigma$.

\begin{figure}[t!]
\centering
\includegraphics[width = 0.7\textwidth]{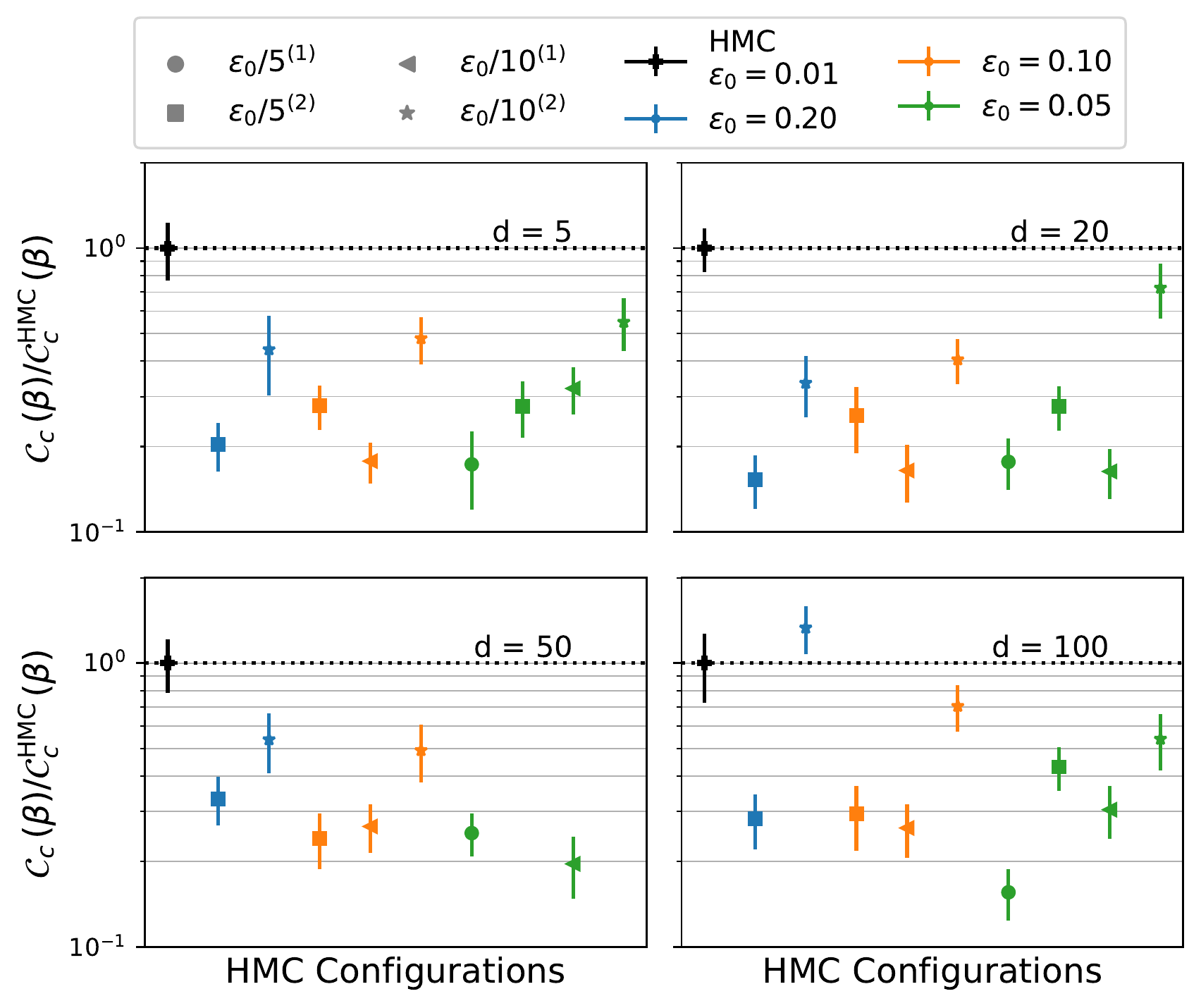}
\vspace{-10pt}
\caption{Efficiency of DRHMC for Neal's funnel.
These plots show the ratio of cost per effective sample of $\beta$ for DRHMC vs HMC
when sampling Neal's funnel for different dimensions ($d = 5, 20, 50, 100$).
ESS for the cost is estimated using standard error with reference samples (Eq.~\ref{eq:essr}).
Black points and horizontal dotted black lines show the reference ratio (=1 for  HMC with $\eps=0.01$).
Different colors show the cost for DRHMC with different first step sizes $\eps_0$; smaller is better.
Different shapes correspond to different configurations (number of proposals  $k$ and reduction factor $a$).
We show only configurations with $\eps_{\mathrm{min}} \leq 0.01$.
Error-bars are estimated by bootstrapping over chains.
}
\label{fig:funnelcost}
\end{figure}

Figure \ref{fig:funnelcost} illustrates the efficiency gain of DRHMC over HMC for Neal's funnel.
We show the cost per effective sample of $\beta$ for funnels of dimensions $d= 5, 10, 20, 100$.  $\textrm{ESS}_r$ can be biased to the high side because it only depends on the autocorrelation length of the chain and not that it is sampling the correct stationary distribution.  
Thus we use square errors to estimate effective sample size ($\mathrm{ESS}_c$).
This requires reference samples from the distribution, which are simple to generate independently using a non-centered parameterization of the funnel \citep{Betancourt13}.

Figure \ref{fig:funnelcost} shows that DRHMC is consistently a factor of 4 more efficient than HMC in terms of log density and gradient evaluations required for a given effective sample size; in some configurations the advantage is as much as a factor of 8.
% note you can't read 10 from Fig. 6; closer to 8.
We restrict attention to configurations for which HMC is able to sample $\beta$ to plus or minus three standard deviations (i.e., $\epsilon \leq  0.1$).  For all configurations, we applied Kolmogorov–Smirnov (KS) tests to the body and tails of the distributions to ensure we are sampling the correct distribution.

\subsection{Eight schools model}

One of the motivating applications for Bayesian hierarchical modeling was a meta-analysis of the effects of a test preparation intervention on students in eight schools \citep{Rubin81}.  The data consists of the differences in pre-test and post-test scores, which are reported as an average $y_n$ and standard deviation $\sigma_n$ for each school $n$.  The hierarchical model uses parameters
$\theta_n$ for the efficacy in each school and assigns them a hierarchical normal prior with unknown location $\mu$ and scale $\tau$. The generative model is as follows \citep{gelman2013bda}.\footnote{The positive half-Cauchy distribution uses a location-scale parameterization whereas the normal uses a location-variance parameterization.}
\[
\mu \sim \mathcal{N}(0,5^2), 
\qquad \tau \sim \mathrm{Cauchy}_+(0,5),
\]
\[
\theta_n \sim \mathcal{N}(\mu, \tau^2), \ \textrm{and}
\]
\[
y_n \sim \mathcal{N}(\theta_n,\sigma_n^2).
\]
The hyperparameter $\mu$ represents the average treatment effect across schools
and $\tau$ the scale of variation of effects among schools. 
As $\tau \rightarrow \infty$, the model approaches no pooling, i.e., each of the school treatment effects is estimated independently.  
As $\tau \rightarrow 0$, the model approaches complete pooling, i.e., all of the school treatment effects are the same.
For small values of $\tau$, the school-level effects $\theta_n$ are squeezed together; for large values, they are allowed to vary widely.  This yields a multiscale, funnel-like geometry in the $\tau$ and $\theta$ parameters, where we would expect delayed rejection to improve the performance of baseline HMC.

Figure~\ref{fig:eightschool} evaluates several configurations of the DRHMC algorithm as applied to the eight schools problem (see \ref{sec:setup}), plotting the cost of each configuration using the standard error method ($\mathcal{C}_c$) for the slowest mixing parameter.
We use the reference samples provided by the \texttt{posteriordb} database\footnote{\url{https://github.com/stan-dev/posteriordb}} 
to estimate the mean and variance of the parameters as needed to calculate error-based effective sample size ($\textrm{ESS}_c$). 
The best DRHMC configuration improves over the best HMC configuration by a factor of three for estimating the parameter mean. 
Different configurations for HMC perform the best for the first and second moment, with the cost of second moment estimation by DRHMC being on par with that of HMC.

\begin{figure}[t!]
\includegraphics[width = \textwidth]{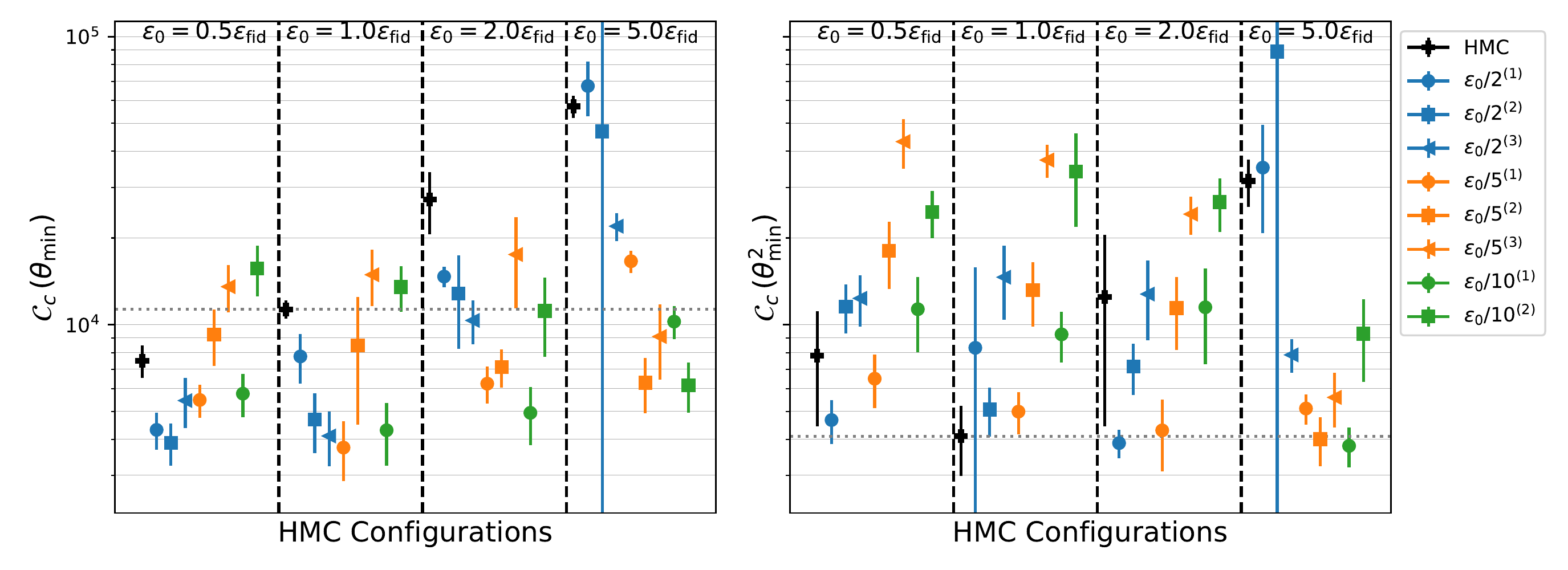}
\vspace{-10pt}
\caption{Cost per effective sample for HMC and DRHMC for the eight schools model. 
% \BC{Cost (vertical axis) should be explained in the figure.}
The two panels show the cost for the slowest dimension for the first ($\theta$) and second ($\theta^2$) moments respectively.
The cost for HMC is shown in black points, as estimated by using the standard error method for ESS.
Different configurations for DRHMC are shown in different colors (reduction factor, $a$) and shapes (number of proposals, $k$).
Different step sizes are separated by vertical dashed black lines. 
Dotted horizontal black line shows the cost for the reference configuration (as fit by Stan) of HMC.}
\label{fig:eightschool}
\end{figure}

\subsection{Gull's lighthouse}

Challenging posterior geometries arise even in simple two dimensional problems if the data is not very informative.  For example, consider estimating the direction of flashes emanating from a coastal lighthouse \cite[p.~59]{Gulls88}.  Assume the lighthouse is at position $x_0$ along a straight coast at distance $y$ into the sea.  Its light is spinning and emits a series of collimated flashes at random intervals which are then detected, each at a single point on the coastline.  Given N$_f$ flashes recorded at the positions $x_i, \, i=1,\dots,\mathrm{N}_f$, we perform a Bayesian estimation of the position of the lighthouse ($x_0, y$). 

The lighthouse flashes in a random direction $\theta$,
relative to vertical,
drawn from a uniform distribution on $(-\pi/2,\pi/2)$. Such a flash will be observed at location $x_i$ on the coast, where 
$\theta = \arctan((x_i - x_0)/y)$.  Applying a change of variables, the likelihood of observing a flash is
\begin{eqnarray*}
p(x_i \mid x_0, y) & = & \frac{y}{\pi (y^2 + (x_i-x_0)^2)}
\\[4pt]
& = & \textrm{Cauchy}(x_i \mid x_0, y).
\end{eqnarray*}
With improper uniform priors on $x_0$ and $y$, and the assumption that the flashes are independent, the posterior is proportional to the product of observation likelihoods,
\[
p(x_0, y \mid \{x_i\}) \ \propto \ \prod_i \textrm{Cauchy}(x_i \mid x_0, y).
\]
% \notes{what's the pathology of the distribution? It's still funnel, right, and not heavy tails of Cauchy distribution?}
% \BC{This is an important question.  If it has the heavy tails of the Cauchy distribution then the moments are ill defined and we should be evaluating with quantiles like medians.  We could evaluate a simple multivariate Student-t distribution 1, 2, 4, 8, and 16 d.o.f (from Cauchy to nearly normal).}

In Figure \ref{fig:gulls}, we show the cost $\mathcal{C}_r$ for the case with $N_f=3$ flashes
observed at $x_i = 0.9,\, 1.2,\, 1.21$, for both the parameters $x_0$ and $y$.
We estimate ESS by measuring autocorrelation length of the chains since there are no reference samples
available for this model.  For estimating $y$, whose effective sample size is an order of magnitude lower than that of $x_0$ with HMC, DRHMC is a factor of five more efficient;  there are no gains in sampling the parameter $x_0$ that mixes well with HMC. 

\begin{figure}[t!]
\includegraphics[width = \textwidth]{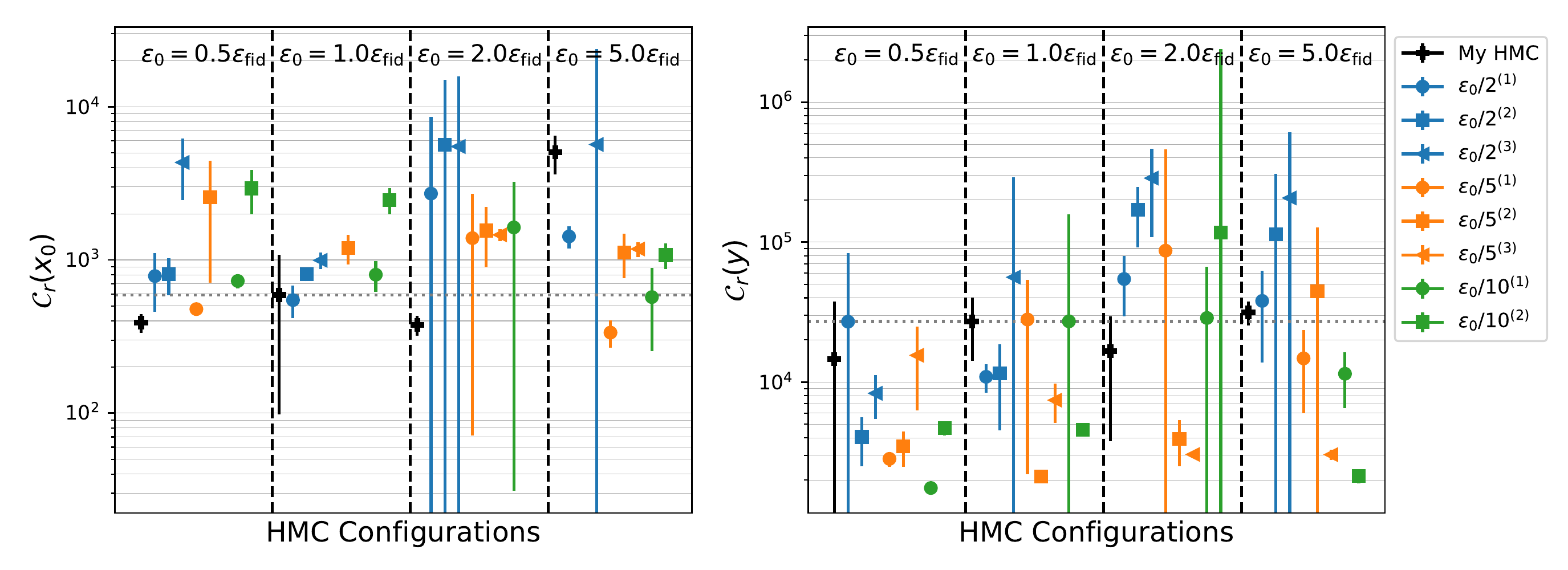}
\vspace{-10pt}
\caption{Cost per effective sample for HMC and DRHMC for Gull's lighthouse model.
The two panels show the cost for the two dimensions of the model. 
Symbol legends are the same as in Figure \ref{fig:eightschool}.
ESS is estimated by measuring autocorrelation length of the chains.
% \BC{I'm finding the through-line on these plots hard to follow.  I'm still thinking I'd prefer a line plot where the line collects the different configurations across the step sizes.  As is, I find myself trying to figure this out from the plots.}
}
\label{fig:gulls}
\end{figure}

\subsection{Gaussian Mixture Model}

Mixture models present problems for samplers with fixed step sizes when the mixture components are of different scales.  Multimodal distributions whose components have varying geometries also defeat global tuning for HMC.
HMC relies on tuning these parameters before sampling (for example, Stan first runs a warmup phase that performs adaptation before sampling begins).
As in other intrinsically multiscale problems, DRHMC has the potential to outperform baseline HMC by using different proposal scales
in different regions of the state space. 
 
To simulate the situation arising with multivariate posteriors, we consider a univariate Gaussian mixture with equal mixing weights on the components.  We take fairly separated locations $\mu_i$ that still allow mixing.  The scales $\sigma_i$ then vary by an order of magnitude. The model pdf is
\begin{equation}
p(\theta)  = \sum_{i=1, 2}\phi_i \cdot \mathcal{N}(\theta \mid \mu_i, \sigma_i^2),
\end{equation}
where we fix
\[
\phi_1= 0.5, \phi_2 = 0.5
\quad
\mu_1 = 0, \mu_2 = 3, \ \textrm{and}
\quad
\sigma_1 = 0.1, \sigma_2 = 1.
\]
Our goal is then to sample the univariate parameter $\theta \in \mathbb{R}$.  We choose this simple problem for illustration because sampling mixtures only becomes more challenging in higher dimensions with differently conditioned components, in situations where the modes are either more widely separated or more highly overlapping, or when the weights of the components are highly skewed.  The optimal step size for the components is directly proportional to the component's scale, which varies by an order of magnitude.

Figure~\ref{fig:gaussmix} shows that the best DRHMC configurations can be twice as efficient as HMC.  This gap can be made arbitrarily wide by increasing the number of dimensions and the difference in scales between the modes.  

\begin{figure}[t!]
\includegraphics[width = \textwidth]{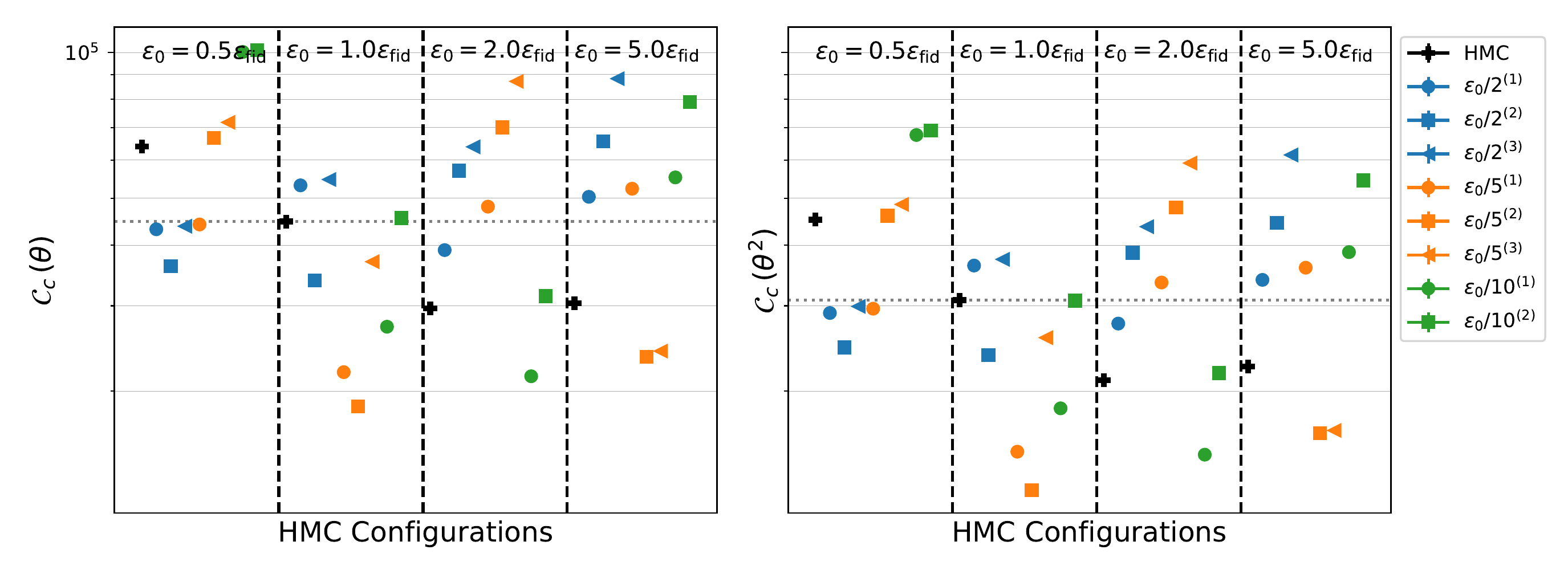}
\vspace{-10pt}
\caption{Cost per effective sample for HMC and DRHMC for the Gaussian mixture model with components varying in scale by a factor of ten.
The two panels show the cost for estimating the mean of $\theta$ and $\theta^2$, the second of which determines variance.  Symbol legends are the same as in Figure \ref{fig:eightschool}.  ESS is estimated with the standard error method after generating reference samples from the Gaussian mixture model.}\label{fig:gaussmix}
\end{figure}

\subsection{Stochastic volatility model}

\begin{figure}[t!] 
\subfloat[Delayed Rejection]{\includegraphics[width = \textwidth]{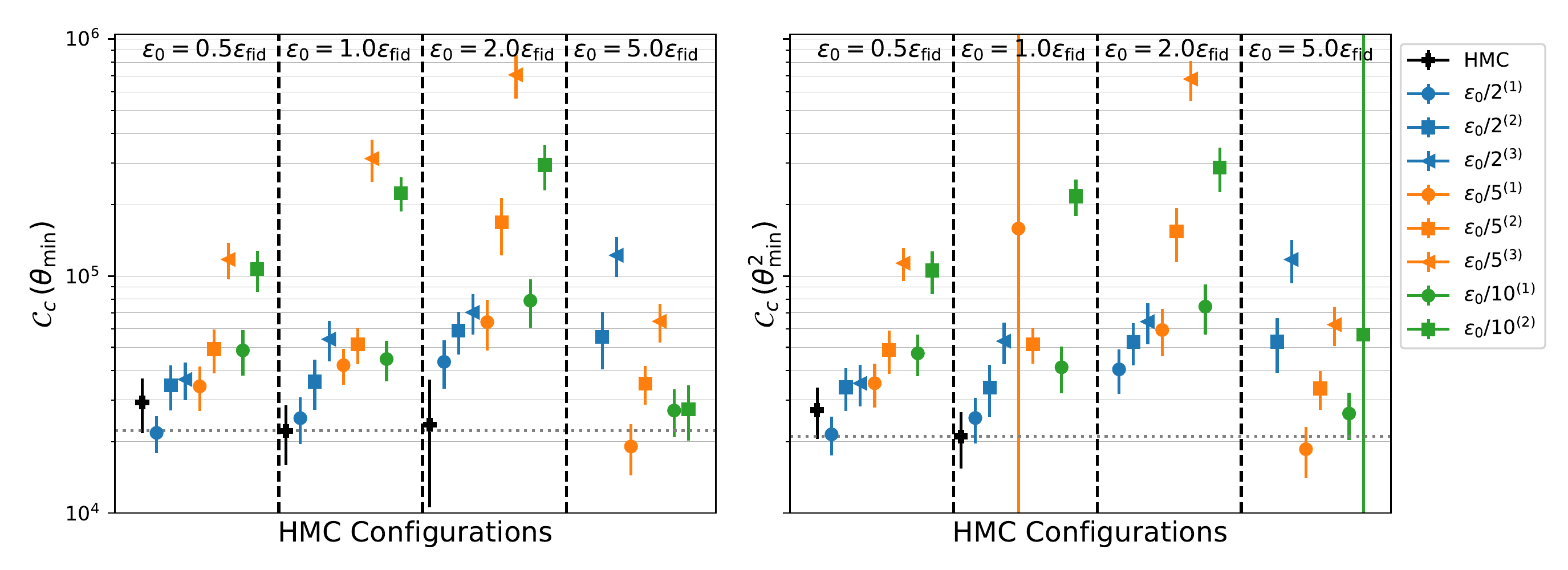}}
\vspace{-10pt}
\subfloat[Probabilistic Delayed Rejection]{\includegraphics[width = \textwidth]{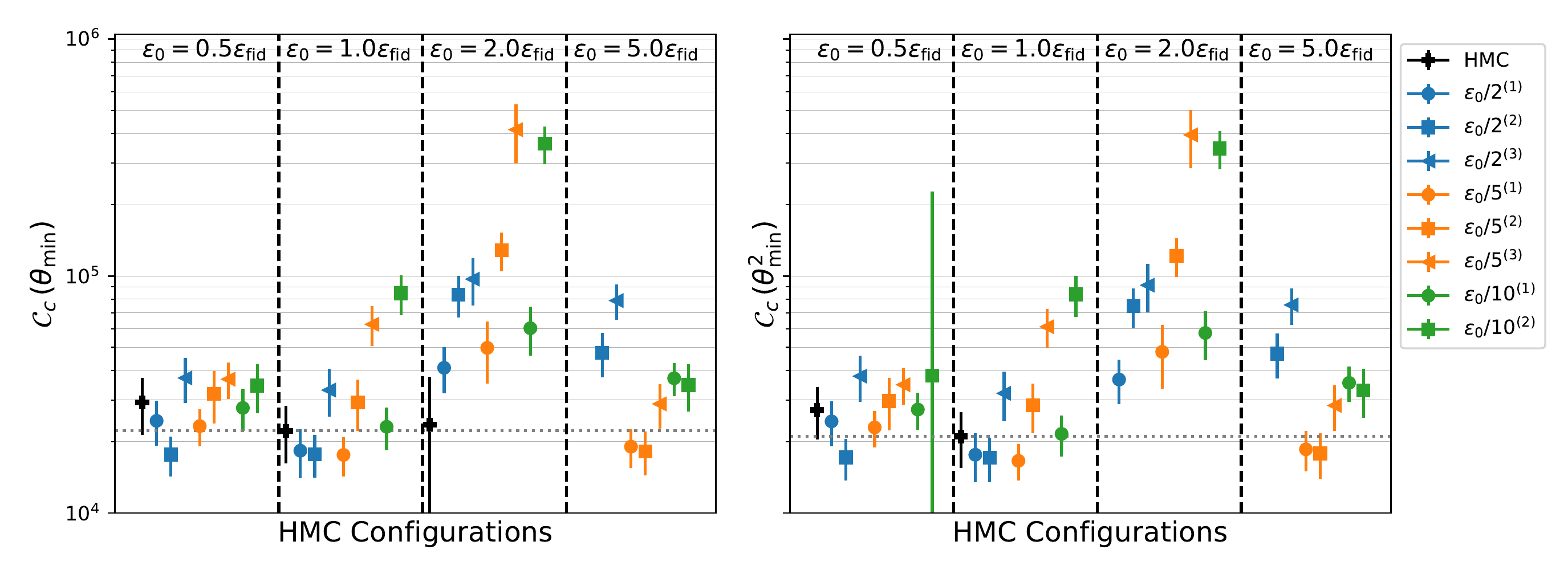}}
\vspace{-10pt}
\caption{Cost per effective sample for HMC and (a, top) DRHMC and (b, bottom) Probabilisitic DRHMC for Stochastic Volatility model.
The two panels show the cost for the slowest dimension for first ($\theta$) and second ($\theta^2$) moments respectively.
Symbol legends are the same as in Figure \ref{fig:eightschool}.
}
\label{fig:stochvcost}
\end{figure}

Finally we consider an example that does not suffer from the pathology of multiscale distributions,
but is still challenging due to high dimensionality and correlated parameters.
Stochastic volatility models \citep{Kim98} seek to model the volatility (i.e., variance) of a return on a financial asset,
such as an option to buy a security.
This changing volatility is modeled as a latent stochastic process in discrete time.
Given the mean corrected returns $y_t$ on an underlying asset at $T$ equally spaced time points as input data,
we are interested in estimating the latent parameter $h_t$ for the log volatility,
mean $\mu$ and variance $\sigma$ of log volatility, 
as well as the persistence of the volatility $\phi$.
Thus the parameter vector is $q=\{\mu, \sigma, \phi, h_{t=1,...,T}\}$, with
\begin{align}
\phi \sim \mathrm{uniform}(-1,1); \, \, \quad \sigma &\sim \mathrm{Cauchy}(0,5); \quad \mu \sim \mathrm{Cauchy}(0,10) \nonumber \\ 
h_1 \sim \mathcal{N}\Big(\mu, \frac{\sigma^2}{{1 - \phi^2}}\Big); \quad h_t &\sim \mathcal{N}\big(\mu + \phi (h_{t - 1} -  \mu), \sigma^2\big), \quad t = 2,3,\dots,T \nonumber \\
y_t &\sim \mathcal{N}\big(0, \mathrm{e}^{h_t}\big), \quad t = 1,2,\dots, T.   \nonumber
\end{align}
The posterior exhibits varying curvature due to the hierarchical prior on the volatility parameters, which reinforces the natural correlation among the volatility estimates due to their sequencing in time.  Figure \ref{fig:stochvcost}(a)
shows that the the additional computational cost of DRHMC 
ends up making it more costly per effective sample than HMC.

The cost of DRHMC is high her because the original step size is optimal, so that retrying with a lower step size only doubles computational costs.  To close the gap with fixed step size HMC when scales do not vary, we introduce a probabilistic modification of DRHMC that only retries when the previous proposal had a high probability of being rejected.  Specifically, We consider a scheme in which subsequent proposals are made with probability
\[
p(x, s_j) = 1 - \alpha(s_j),
\]
where $s_j$ is the previous proposal made from $x$ and $\alpha(s_j)$ is its acceptance probability.  Figure \ref{fig:stochvcost}(b) shows how retrying with a probability equal to the original failure chance avoids needless step size reduction, allowing DRHMC now to exceed slightly the efficiency of HMC.

\section{Discussion}
\label{sec:discussion}

We introduced a novel application of delayed rejection to Hamiltonian Monte Carlo (HMC) sampling in which subsequent proposals are made for the same integration time at a reduced step size.  We showed that in multiscale posteriors such as mixture models or hierarchical models, delayed rejection can boost performance by a factor of five or more.  We provided a proof that even if the initial step size is chosen to be too large, delayed rejection introduces at most a factor of two additional cost over choosing the optimal baseline step size. We also proved, in an accessible fashion avoiding measure theory, detailed balance for both classical Hamiltonian Monte Carlo and our new proposal.

In cases where the target density is not multiscale, we introduced a novel form of delayed rejection where retries are only attempted when the previous proposal had a high chance of failure.  Unlike the case for HMC, which will fail with potentially hard to diagnose biases, DRHMC with probabilistic retries is robust to the initial choice of step size, thus reducing overall costs when tuning step size is expensive.

In realistic problems, we often do not know if our target distribution suffers from multiscale or varying geometry pathologies as in the cases we considered.  For example, varying amounts of data and varying noise ratios in the data can dramatically change the posterior geometry, changing roughly normal posteriors into funnels or vice-versa, depending on the model parameterization \citep{papaspiliopoulos2007general,Betancourt13}.

Reducing the step size is not the only way of constructing delayed proposals.
Another approach would be to replace the leapfrog integrator altogether for retries, for example with an implicit symplectic integrator \citep{Pourzanjani19}. Such integrators may additionally be able to deal with stiffness arising from high correlation.  Delayed rejection HMC could also be combined with other improvements to HMC, e.g. ensemble preconditioning \citep{Matthews16}, Riemannian HMC \citep{Betancourt13}, and manifold HMC \citep{Au20}.

\bibliographystyle{ba}
%\bibliography{<bib-data-file>}% place <bib-data-file> in ./bib folder 
\bibliography{biblio}

\begin{thebibliography}{38}
\newcommand{\enquote}[1]{``#1''}
\expandafter\ifx\csname natexlab\endcsname\relax\def\natexlab#1{#1}\fi
\expandafter\ifx\csname url\endcsname\relax
  \def\url#1{{\tt #1}}\fi
\expandafter\ifx\csname urlprefix\endcsname\relax\def\urlprefix{URL }\fi
\ifx\endbibitem\undefined \let\endbibitem\relax\fi

\bibitem[{{Andrieu} et~al.(2020){Andrieu}, {Lee}, and
  {Livingstone}}]{Andrieu20}
{Andrieu}, C., {Lee}, A., and {Livingstone}, S. (2020).
\newblock \enquote{{A general perspective on the Metropolis-Hastings kernel}.}
\newblock {\em arXiv e-prints\/}, arXiv:2012.14881.
\endbibitem

\bibitem[{Au et~al.(2020)Au, Graham, and Thiery}]{Au20}
Au, K.~X., Graham, M.~M., and Thiery, A.~H. (2020).
\newblock \enquote{Manifold lifting: scaling MCMC to the vanishing noise
  regime.}
\newblock {\em arXiv preprint arXiv:2003.03950\/}.
\endbibitem

\bibitem[{Beskos et~al.(2013)Beskos, Pillai, Roberts, Sanz-Serna, and
  Stuart}]{beskos2013optimal}
Beskos, A., Pillai, N., Roberts, G., Sanz-Serna, J.-M., and Stuart, A. (2013).
\newblock \enquote{Optimal tuning of the hybrid {M}onte {C}arlo algorithm.}
\newblock {\em Bernoulli\/}, 19(5A): 1501--1534.
\endbibitem

\bibitem[{Betancourt(2013)}]{betancourt2013general}
Betancourt, M. (2013).
\newblock \enquote{A general metric for {R}iemannian manifold {H}amiltonian
  {M}onte {C}arlo.}
\newblock In {\em International Conference on Geometric Science of
  Information\/}, 327--334. Springer.
\endbibitem

\bibitem[{Betancourt(2017)}]{betancourt2017conceptual}
--- (2017).
\newblock \enquote{A conceptual introduction to Hamiltonian Monte Carlo.}
\newblock {\em arXiv preprint arXiv:1701.02434\/}.
\endbibitem

\bibitem[{Betancourt and Girolami(2015)}]{Betancourt13}
Betancourt, M. and Girolami, M. (2015).
\newblock \enquote{Hamiltonian Monte Carlo for hierarchical models.}
\newblock {\em Current trends in Bayesian methodology with applications\/},
  79(30): 2--4.
\endbibitem

\bibitem[{Billingsley(2012)}]{Billingsley}
Billingsley, P. (2012).
\newblock {\em Probability and Measure\/}.
\newblock John Wiley and Sons, anniversary edition.
\endbibitem

\bibitem[{Brofos and Lederman(2021{\natexlab{a}})}]{brofos2021evaluating}
Brofos, J. and Lederman, R.~R. (2021{\natexlab{a}}).
\newblock \enquote{Evaluating the Implicit Midpoint Integrator for {R}iemannian
  {H}amiltonian {M}onte {C}arlo.}
\newblock In {\em International Conference on Machine Learning\/}, 1072--1081.
  PMLR.
\endbibitem

\bibitem[{Brofos and Lederman(2021{\natexlab{b}})}]{brofos2021}
Brofos, J.~A. and Lederman, R.~R. (2021{\natexlab{b}}).
\newblock \enquote{Evaluating the Implicit Midpoint Integrator for Riemannian
  Manifold Hamiltonian Monte Carlo.}
\newblock {\em arXiv preprint arXiv:2102.07139\/}.
\endbibitem

\bibitem[{{Campos} and {Sanz-Serna}(2015)}]{Campos15}
{Campos}, C.~M. and {Sanz-Serna}, J.~M. (2015).
\newblock \enquote{{Extra Chance Generalized Hybrid Monte Carlo}.}
\newblock {\em Journal of Computational Physics\/}, 281: 365--374.
\endbibitem

\bibitem[{Creutz and Gocksch(1989)}]{creutz1989higher}
Creutz, M. and Gocksch, A. (1989).
\newblock \enquote{Higher-order hybrid Monte Carlo algorithms.}
\newblock {\em Physical Review Letters\/}, 63(1): 9.
\endbibitem

\bibitem[{Duane et~al.(1987)Duane, Kennedy, Pendleton, and Roweth}]{Duane1987}
Duane, S., Kennedy, A.~D., Pendleton, B.~J., and Roweth, D. (1987).
\newblock \enquote{{Hybrid Monte Carlo}.}
\newblock {\em Phys. Lett. B\/}, 195: 216--222.
\endbibitem

\bibitem[{Faris(2020)}]{fariscalc}
Faris, W. (2020).
\newblock \enquote{Extremely Advanced Calculus: Multivariable Analysis,
  Vectors, Forms, Metric.}
\newblock
  \url{https://www.ams.org/open-math-notes/omn-view-listing?listingId=111278}.
\newblock Reference \# OMN:202012.111278.
\endbibitem

\bibitem[{Gelman et~al.(2013)Gelman, Carlin, Stern, Dunson, Vehtari, and
  Rubin}]{gelman2013bda}
Gelman, A., Carlin, J.~B., Stern, H.~S., Dunson, D.~B., Vehtari, A., and Rubin,
  D.~B. (2013).
\newblock {\em Bayesian Data Analysis\/}.
\newblock Chapman Hall/CRC, third edition edition.
\endbibitem

\bibitem[{Geyer(2011)}]{geyer11}
Geyer, C.~J. (2011).
\newblock \enquote{Introduction to {M}arkov chain {M}onte {C}arlo.}
\newblock In Brooks, S., Gelman, A., Jones, G.~L., and Meng, X.-L. (eds.), {\em
  Handbook of {M}arkov chain {M}onte {C}arlo\/}, chapter~1. Boca Raton, FL:
  Chapman and Hall/CRC.
\endbibitem

\bibitem[{Girolami and Calderhead(2011)}]{girolami2011riemann}
Girolami, M. and Calderhead, B. (2011).
\newblock \enquote{Riemann manifold {L}angevin and {H}amiltonian {M}onte
  {C}arlo methods.}
\newblock {\em Journal of the Royal Statistical Society: Series B (Statistical
  Methodology)\/}, 73(2): 123--214.
\endbibitem

\bibitem[{Green and Mira(2001)}]{Green01}
Green, P.~J. and Mira, A. (2001).
\newblock \enquote{Delayed Rejection in Reversible Jump Metropolis-Hastings.}
\newblock {\em Biometrika\/}, 88(4): 1035--1053.
\newline\urlprefix\url{http://www.jstor.org/stable/2673700}
\endbibitem

\bibitem[{Gull(1988)}]{Gulls88}
Gull, S.~F. (1988).
\newblock \enquote{Bayesian inductive inference and maximum entropy.}
\newblock In {\em Maximum-entropy and Bayesian methods in Science and
  Engineering\/}, 53--74. Springer.
\endbibitem

\bibitem[{Haario et~al.(2006)Haario, Laine, Mira, and Saksman}]{haario2006dram}
Haario, H., Laine, M., Mira, A., and Saksman, E. (2006).
\newblock \enquote{{DRAM}: efficient adaptive {MCMC}.}
\newblock {\em Statistics and Computing\/}, 16(4): 339--354.
\endbibitem

\bibitem[{{Hoffman} and {Gelman}(2011)}]{NUTS}
{Hoffman}, M.~D. and {Gelman}, A. (2011).
\newblock \enquote{{The No-U-Turn Sampler: Adaptively Setting Path Lengths in
  Hamiltonian Monte Carlo}.}
\newblock {\em arXiv e-prints\/}, arXiv:1111.4246.
\endbibitem

\bibitem[{Hunter and Nachtergaele(2001)}]{hunteranal}
Hunter, J. and Nachtergaele, B. (2001).
\newblock {\em Applied Analysis\/}.
\newblock World Scientific.
\newline\urlprefix\url{https://books.google.com/books?id=oOYQVeHmNk4C}
\endbibitem

\bibitem[{Kim et~al.(1998)Kim, Shephard, and Chib}]{Kim98}
Kim, S., Shephard, N., and Chib, S. (1998).
\newblock \enquote{Stochastic Volatility: Likelihood Inference and Comparison
  with ARCH Models.}
\newblock {\em The Review of Economic Studies\/}, 65(3): 361--393.
\newline\urlprefix\url{http://www.jstor.org/stable/2566931}
\endbibitem

\bibitem[{Levy et~al.(2017)Levy, Hoffman, and Sohl-Dickstein}]{Levy17}
Levy, D., Hoffman, M.~D., and Sohl-Dickstein, J. (2017).
\newblock \enquote{Generalizing hamiltonian monte carlo with neural networks.}
\newblock {\em arXiv preprint arXiv:1711.09268\/}.
\endbibitem

\bibitem[{MacKay(1998)}]{mackayerice}
MacKay, D. J.~C. (1998).
\newblock \enquote{Introduction to {M}onte {C}arlo Methods.}
\newblock In Jordan, M.~I. (ed.), {\em Learning in Graphical Models\/}, NATO
  Science Series, 175--204. Kluwer Academic Press.
\newblock \url{http://www.inference.org.uk/mackay/secret/erice.ps.gz}.
\endbibitem

\bibitem[{Matthews et~al.(2016)Matthews, Weare, and Leimkuhler}]{Matthews16}
Matthews, C., Weare, J., and Leimkuhler, B. (2016).
\newblock \enquote{Ensemble preconditioning for Markov chain Monte Carlo
  simulation.}
\newblock {\em arXiv preprint arXiv:1607.03954\/}.
\endbibitem

\bibitem[{Mira(1998)}]{Mirathesis}
Mira, A. (1998).
\newblock \enquote{Ordering, slicing and splitting {M}onte {C}arlo {M}arkov
  chains.}
\newblock Ph.D. thesis, University of Minnesota.
\newblock Available at
  \url{https://www.researchgate.net/publication/2676740_Ordering_Slicing_And_Splitting_Monte_Carlo_Markov_Chains}.
\endbibitem

\bibitem[{Neal(2003)}]{Neal2000}
Neal, R.~M. (2003).
\newblock \enquote{Slice sampling (with discussion).}
\newblock {\em Ann.\ Stat.\/}, 31: 705--767.
\endbibitem

\bibitem[{Neal(2011)}]{neal11}
--- (2011).
\newblock \enquote{{MCMC} using {H}amiltonian dynamics.}
\newblock In Brooks, S., Gelman, A., Jones, G.~L., and Meng, X.-L. (eds.), {\em
  Handbook of {M}arkov chain {M}onte {C}arlo\/}, chapter~5. Boca Raton, FL:
  Chapman and Hall/CRC.
\endbibitem

\bibitem[{Papaspiliopoulos et~al.(2007)Papaspiliopoulos, Roberts, and
  Sk{\"o}ld}]{papaspiliopoulos2007general}
Papaspiliopoulos, O., Roberts, G.~O., and Sk{\"o}ld, M. (2007).
\newblock \enquote{A general framework for the parametrization of hierarchical
  models.}
\newblock {\em Statistical Science\/}, 59--73.
\endbibitem

\bibitem[{{Pourzanjani} and {Petzold}(2019)}]{Pourzanjani19}
{Pourzanjani}, A.~A. and {Petzold}, L.~R. (2019).
\newblock \enquote{{Implicit Hamiltonian Monte Carlo for Sampling Multiscale
  Distributions}.}
\newblock {\em arXiv e-prints\/}, arXiv:1911.05754.
\endbibitem

\bibitem[{Rubin(1981)}]{Rubin81}
Rubin, D.~B. (1981).
\newblock \enquote{Estimation in Parallel Randomized Experiments.}
\newblock {\em Journal of Educational Statistics\/}, 6(4): 377--401.
\newline\urlprefix\url{https://doi.org/10.3102/10769986006004377}
\endbibitem

\bibitem[{Rudin(1964)}]{babyrudin}
Rudin, W. (1964).
\newblock {\em Principles of Mathematical Analysis\/}.
\newblock McGraw-Hill, 2, reprint edition.
\endbibitem

\bibitem[{{Sohl-Dickstein} et~al.(2014){Sohl-Dickstein}, {Mudigonda}, and
  {DeWeese}}]{Sohl-Dickstein14}
{Sohl-Dickstein}, J., {Mudigonda}, M., and {DeWeese}, M.~R. (2014).
\newblock \enquote{{Hamiltonian Monte Carlo Without Detailed Balance}.}
\newblock {\em arXiv e-prints\/}, arXiv:1409.5191.
\endbibitem

\bibitem[{{Stan Development Team}(2011)}]{Stan}
{Stan Development Team} (2011).
\newblock \enquote{Stan {U}ser's {G}uide.}
\newblock \url{https://mc-stan.org}.
\endbibitem

\bibitem[{Stein and Shakarchi(2005)}]{steinanal}
Stein, E.~M. and Shakarchi, R. (2005).
\newblock {\em Real analysis: measure theory, integration, and {H}ilbert spaces
  (Princeton Lectures in Analysis, No. 3)\/}.
\newblock Princeton University Press.
\endbibitem

\bibitem[{Tierney and Mira(1999)}]{Tierney99}
Tierney, L. and Mira, A. (1999).
\newblock \enquote{Some adaptive {M}onte {C}arlo methods for {B}ayesian
  inference.}
\newblock {\em Statist. Med.\/}, 18: 2507--2515.
\endbibitem

\bibitem[{{Wu} et~al.(2018){Wu}, {Stoehr}, and {Robert}}]{Wu2018}
{Wu}, C., {Stoehr}, J., and {Robert}, C.~P. (2018).
\newblock \enquote{{Faster Hamiltonian Monte Carlo by Learning Leapfrog
  Scale}.}
\newblock {\em arXiv e-prints\/}, arXiv:1810.04449.
\endbibitem

\bibitem[{Yoshida(1990)}]{yoshida1990construction}
Yoshida, H. (1990).
\newblock \enquote{Construction of higher order symplectic integrators.}
\newblock {\em Physics letters A\/}, 150(5-7): 262--268.
\endbibitem

\end{thebibliography}

\appendix
\section{Proof of Lemma~\ref{lem:map}}
\label{app:deterministic}

We first give a formal proof using the change of variables formula for integration,
then afterwards discuss an abbreviated version needing the transformation rule for the delta
distribution.

\begin{proof}
Consider Metropolis with kernel \eqref{eq:mhker} using as the proposal kernel $q$ the deterministic proposal $q_F(x,y)=\delta(y-F(x))$ from \eqref{eq:propmap} with $F$ a volume-preserving involution,
and an acceptance probability obeying \eqref{eq:alrats}.
The goal is to establish the weak form of DB \eqref{eq:dbm}.
As in Section~\ref{s:mh}, the rejection $r(x)$ component of $k(x,y)$ in \eqref{eq:mhker} is already symmetric, so that this may be dropped.
We are left to establish the weak form of \eqref{eq:alrat},
namely
\be
\int_A\int_B \pi(x)\al(x,y)q_F(x,y)\,\, \textrm{d}xdy \; = \; \int_A\int_B \pi(y)\al(y,x)q_F(y,x)\,\, \textrm{d}xdy 
\label{dbmF}
\ee
for all measurable subsets $A,B \subset S$.
We first substitute \eqref{eq:alrats} into the left-hand side,
then apply the sifting property of the delta distribution,
\bea
\int_A\int_B \pi(x)\al(x,y)\delta(y-F(x))\,\, \textrm{d}xdy &=&
\int_A\int_B \pi(y)\al(y,x)\delta(y-F(x))\,\, \textrm{d}xdy
\nonumber \\
&=&\int_{B\cap F^{-1}(A)} \pi(F(x)) \al(F(x),x) \, \, \textrm{d}x
\nonumber \\
&=&\int_{F(B)\cap A} \pi(y) \al(y,F^{-1}(y)) \cdot |\det DF(F^{-1}(y))|^{-1} dy
\nonumber \\
&=&\int_{F^{-1}(B)\cap A} \pi(y) \al(y,F(y))\, dy
\nonumber \\
&=&\int_A\int_B \pi(y)\al(y,x)\delta(x-F(y))\,\, \textrm{d}xdy
~.
\eea
Here in the 3rd line we applied the change of variables formula for integration
(e.g., \cite[Thm.~10.9]{babyrudin}) where $y=F(x)$, and in the 4th line used the
facts that $F^{-1}=F$ and that the Jacobian factor is everywhere unity.
The last equality used again the sifting property.
This verifies \eqref{eq:dbm}. Invariance of the pdf $\pi$ under the Markov
chain follows from DB as in the start of Section~\ref{sec:background}.
\end{proof}

A shorthand version of this proof may be instructive, and goes as follows.
The $x\leftrightarrow y$ symmetry of the expression $\pi(x)q_F(x,y)\al(x,y)$
needs to be verified.
Using \eqref{eq:alrats} leaves only the $x\leftrightarrow y$ symmetry of $q_F(x,y)$ to
be verified. The transformation rule
for the delta distribution under a nonsingular map (e.g., \cite[Sec.~2.4-5]{fariscalc}) gives
$$
q_F(x,y) = \delta(F(x)-y) = \sum_{z\in\R^n: F(z)=y} |\det DF(z)|^{-1} \delta(x-z) =
1 \cdot \delta(x-F^{-1}(y)) = \delta(x-F(y)) = q_F(y,x)~,
$$
where of course we again needed the unit Jacobian determinant,
and that $F$ is injective (giving only one preimage in the sum), and $F^{-1}=F$.

\section{Proof of invariance of HMC}
\label{app:hmcproof}

In this appendix we prove Theorem \ref{thm:hmc}.
We begin by first defining shear maps since our proof will build upon their volume-preserving property. We then prove an auxiliary lemma and conclude with the main proof.

\begin{dfn}[Shear]
  Any map on $\R^{2d}$ of the form
  $(q,p) \mapsto (q + G(p),p)$, or
  $(q,p) \mapsto (q, p +G(q))$,
  where $G:\R^d\to \R^d$ is some differentiable map, is called a {\em shear}.
\end{dfn}
\begin{pro}
  Any shear is volume-preserving.
  \label{p:shear}
\end{pro}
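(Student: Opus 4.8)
The plan is to invoke the criterion stated just above the proposition: a map $F$ is volume-preserving if and only if $|\det DF(x)| = 1$ for all $x\in S$. Hence it suffices to compute the Jacobian of each type of shear and check that its determinant has absolute value one (indeed, exactly one).

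First I would treat the shear of the first type, $F(q,p) = (q + G(p), p)$. Writing the state as the concatenation $(q,p) \in \R^d \times \R^d = \R^{2d}$ and differentiating blockwise, the Jacobian is the $2d\times 2d$ block matrix
\[
DF(q,p) = \begin{pmatrix} I_d & DG(p) \\ 0 & I_d \end{pmatrix},
\]
where $I_d$ denotes the $d\times d$ identity and $DG$ is the Jacobian of $G$. This matrix is block upper triangular, so its determinant equals the product of the determinants of its diagonal blocks, namely $\det(I_d)\cdot\det(I_d) = 1$.

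The shear of the second type, $F(q,p) = (q, p + G(q))$, is handled identically: its Jacobian is the block lower triangular matrix
\[
DF(q,p) = \begin{pmatrix} I_d & 0 \\ DG(q) & I_d \end{pmatrix},
\]
which again has unit determinant. In both cases $|\det DF(q,p)| = 1$ at every point, so by the stated equivalence the shear is volume-preserving.

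I expect no genuine obstacle here; the result is essentially a one-line Jacobian computation. The only point requiring a moment's care is correctly identifying the block-triangular structure of $DF$ and invoking the standard fact that a block triangular matrix has determinant equal to the product of the determinants of its diagonal blocks.
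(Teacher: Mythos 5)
Your proof is correct and follows exactly the same route as the paper's: compute the blockwise Jacobian of each type of shear, note its block-triangular structure with identity diagonal blocks, and conclude $\det DF \equiv 1$, invoking the standard equivalence between unit Jacobian determinant and volume preservation. Nothing is missing; your version is just slightly more explicit than the paper's two-line argument.
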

\begin{proof}
  Let $F$ be a shear.
  Computing its Jacobian with $d\times d$ blocks,
  $DF = \left[\begin{smallmatrix}I_d&DG\\0 &I_d \end{smallmatrix}\right]$, or
    $DF = \left[\begin{smallmatrix}I_d &0\\DG &I_d \end{smallmatrix}\right]$.
  In either case $\det DF \equiv 1$.
\end{proof}

\begin{lem}
  Let $n\in\{0,1,\dots\}$, and let $\eps>0$.
  Recalling the definitions of the leapfrog operator $L_\eps$ and momentum flip $P$ in Section~\ref{s:hmc}, the map $F=L_\eps^n P$ is a volume-preserving involution.
%   (Here, as above, we compose operators to the right, so this
%   means $L^n$ followed by $P$, although it happens not to
%   matter here.)
  \label{l:LnP}
\end{lem}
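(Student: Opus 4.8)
The plan is to verify the two defining properties—volume preservation and the involution property—separately, reducing each to the structure of the leapfrog substeps together with the reversal behaviour of the momentum flip. For volume preservation I would use that determinants multiply under composition, so it suffices to treat each factor of $F=L_\eps^n P$. The momentum flip has Jacobian $\left[\begin{smallmatrix}I_d&0\\0&-I_d\end{smallmatrix}\right]$, hence $|\det DP|=1$. The leapfrog map $L_\eps$ is, by its definition \eqref{Leps}, the composition of its three substeps: a momentum kick $p\mapsto p-\tfrac{\eps}{2}\nabla U(q)$, a drift $q\mapsto q+\eps M^{-1}p$, and a second momentum kick. Each kick shears $p$ by a function of $q$ and the drift shears $q$ by a function of $p$, so each substep is a shear and is volume-preserving by Proposition~\ref{p:shear}. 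Therefore $|\det DL_\eps|=1$, and $|\det DF| = |\det DL_\eps|^{\,n}\,|\det DP| = 1$, establishing volume preservation.

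The involution property is the substantive part, and I would base it on the time-reversibility identity
\[
P L_\eps P = L_{-\eps} = L_\eps^{-1},
\]
where $L_{-\eps}$ denotes a leapfrog step taken with step $-\eps$. The first equality I would check by direct substitution: applying the three substeps of \eqref{Leps} to $(q,-p)$ and negating the momentum at the end reproduces exactly the three substeps of a leapfrog step with $-\eps$, since the two half-kicks and the drift all reverse sign consistently. The second equality follows from a short computation showing that stepping forward by $\eps$ and then by $-\eps$ returns to the starting state. From $P L_\eps P = L_\eps^{-1}$ together with $P^2=I$ I obtain $P L_\eps = L_\eps^{-1} P$, and iterating this gives $P L_\eps^n = L_\eps^{-n} P$ for every $n\ge 0$. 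Consequently
\[
F^2 = L_\eps^n P\,L_\eps^n P = L_\eps^n\,(L_\eps^{-n} P)\,P = L_\eps^n L_\eps^{-n}\,P^2 = I,
\]
so $F=F^{-1}$ and $F$ is an involution. The degenerate case $n=0$ reduces to $F=P$, which is trivially an involution.

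The main obstacle is purely computational: carefully verifying $P L_\eps P = L_{-\eps}$ by tracking the intermediate momentum $\bar p$ and matching the gradient evaluations $\nabla U(q)$ and $\nabla U(q')$ between the forward and the sign-reversed substeps. Once this reversibility identity is secured, both the volume-preservation count and the final composition $F^2=I$ are routine algebra.
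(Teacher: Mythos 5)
Your proof is correct and takes essentially the same route as the paper's: volume preservation via the shear decomposition of $L_\eps$ together with Proposition~\ref{p:shear}, and the involution via the time-reversibility of leapfrog under momentum flip, since your identity $P L_\eps P = L_{-\eps} = L_\eps^{-1}$ is just a repackaging of the paper's $P L_\eps^n P = L_\eps^{-n}$. The only cosmetic difference is that you introduce the negative-step leapfrog $L_{-\eps}$ as an intermediate object where the paper verifies the reversed substeps directly; both reduce to the same substep-by-substep check.
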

\begin{proof}
  $L_\eps$ is the composition of three steps, each of which is a shear and thus
  volume-preserving by Prop.~\ref{p:shear}. $P$ is obviously volume-preserving.
  Thus their composition $L_\eps^nP$ is volume-preserving.
  We now must prove the involution property.
  $L_\eps$ is time-reversible in the sense that, if $L(q_k,p_k)=(q_{k+1},p_{k+1})$,
  one may verify $L(q_{k+1},-p_{k+1})=(q_k,-p_k)$ by
  checking the three steps in reverse order ($\bar{p}$ is negated relative
  to its forward value).
  The same is true for $L_\eps^n$ for any $n$, by similarly reversing each leapfrog.
  Stating this algebraically, $PL_\eps^n P = L_\eps^{-n}$. Using $P=P^{-1}$ and
  rearranging, $(L_\eps^nP)^2 = I$, so $L_\eps^n P$ is an involution.
\end{proof}

\begin{proof}[Proof of Theorem \ref{thm:hmc}]
  It is sufficient to show that each step in the pair is $\pi$-invariant.
  This holds for step 1 (the Gibbs update of $p$) since it preserves the conditional
  over $p$, which is identical at each fixed $q$, while leaving $q$ unaffected.
  It holds for step 2 (one deterministic MH step) since by
  Lemma~\ref{l:LnP}, $F=L_\eps^n P$
  is a volume-preserving involution, so one can apply Lemma~\ref{lem:map}.
\end{proof}
\end{document}